\DeclareMathOperator*{\argmin}{arg\,min}
\title[Generalization Error of Stochastic Mirror Descent]{The Generalization Error of Stochastic Mirror Descent on Over-Parametrized Linear Models}
\begin{document}

\maketitle

\begin{abstract}%
Despite being highly over-parametrized, and having the ability to fully interpolate the training data, deep networks are known to generalize well to unseen data. It is now understood that part of the reason for this is that the training algorithms used have certain implicit regularization properties that ensure interpolating solutions with ``good" properties are found. This is best understood in linear over-parametrized models where it has been shown that the celebrated stochastic gradient descent (SGD) algorithm finds an interpolating solution that is closest in Euclidean distance to the initial weight vector. Different regularizers, replacing Euclidean distance with Bregman divergence, can be obtained if we replace SGD with stochastic mirror descent (SMD). Empirical observations have shown that in the deep network setting, SMD achieves a generalization performance that is different from that of SGD (and which depends on the choice of SMD's potential function. In an attempt to begin to understand this behavior, we obtain the generalization error of SMD for over-parametrized linear models for a binary classification problem where the two classes are drawn from a Gaussian mixture model. We present simulation results that validate the theory and, in particular, introduce two data models, one for which SMD with an $\ell_2$ regularizer (i.e., SGD) outperforms SMD with an $\ell_1$ regularizer, and one for which the reverse happens.
\end{abstract}

\begin{keywords}%
Stochastic gradient descent, stochastic mirror descent, generalization error, binary classification, convex Gaussian min-max theorem
\end{keywords}

\section{Introduction}

Stochastic gradient descent (SGD), along with its variants, is the workhorse of modern machine learning. Among these variants is stochastic mirror descent (SMD) \cite{nemirovski1983problem}, which differs from SGD in that, instead of updating the weight vector, one updates the gradient of a so-called ``potential" function of the weight vector, along the negative direction of the instantaneous gradient of the loss function. The potential function is what defines a particular instantiation SMD. It is required to be differentiable and strictly convex. When the potential is the squared Euclidean norm we get SGD. 

In deep learning the models are over-parameterized, typically with a number of parameters that is orders of magnitude larger than the size of the training set. In such a setting, there are uncountably many weight vectors that perfectly interpolate the data. And so, it is not clear which will generalize well on unseen data---some may, some may not \cite{zhang2016understanding}. One of the open questions in deep learning is why SGD almost invariably finds solutions that generalize well? In an attempt to understand this question \cite{gunasekar2017implicit} showed that, in over-parameterized linear models, GD finds an interpolating weight vector that minimizes its Euclidean distance from the initial weight vector. In particular, if we initialize with zero (or rather very close to zero in practice) , it finds an interpolating weight vector with minimum 2-norm. This is what is called "implicit regularization" and is what makes the solution obtained by GD different from other interpolating solutions. \cite{gunasekar2017implicit} further showed that mirror descent does the same, except that it finds an interpolating solution that minimizes its Bregman divergence from the initial weight vector. This observation allows one to impose different regularizations on the interpolating weight vector. In \cite{azizan2019stochastic} these results were extended to SGD and SMD, and then informally extended to nonlinear models, such as deep networks, in \cite{azizan2021stochastic}.

In \cite{azizan2021stochastic} it was empirically observed that the generalization error of SMD, for deep networks initialized with the {\em same} weight vector and trained on the {\em same} training set, varied with the choice of potential function. In particular, for a ResNet-18 network (with 11 million weights) trained on the CIFAR-10 dataset for the different potentials $\ell_1$, $\ell_2$, $\ell_3$, and $\ell_{10}$-norms, the generalization error varied quite noticeably. Surprisingly,  the $\ell_1$ regularizer yielded the worst generalization performance and the $\ell_{10}$ regularizer the best. The $\ell_2$ (corresponding to SGD) and $\ell_3$ regularizers straddled a midway generalization performance. 
This paper is concerned with studying the generalization error of SMD for different potential functions. As a first step in this direction, we will look at the problem of binary classification in over-parameterized linear models where the two classes are drawn from a Gaussian mixture model. In this setting, we obtain the generalization error of SMD for general potentials and study them in more detail for the $\ell_2$ (i.e., SGD) and $\ell_1$ cases. We introduce two data models, one for which SGD outperforms SMD with an $\ell_1$ regularizer, and one for which the reverse happens. In both cases, the empirical results well match the theory. 

The hope is that the results obtained here will guide us to the analysis of nonlinear over-parameterized models and, ultimately, to understanding the generalization behavior of deep networks under different training algorithms. 

The remainder of the paper is organized as follows. Section \ref{sec:prelim} gives some preliminary descriptions of the SMD algorithm and its implicit regularization property, introduces the binary classification problem for Gaussian mixtures, reviews the CGMT framework (the main tool used for our analysis), and introduces the two explicit data models that will be studied and analyzed. Section \ref{sec:main} gives general expressions for the generalization error for linear classification of binary Gaussian mixture models which are the main results of the paper. Section \ref{sec:spec} gives explicit expressions for the specific models considered and Section \ref{sec:num} gives numerical results collaborating the theory and showcasing the relative merits of SGD and $\ell_1$-SMD. The paper concludes with Section \ref{sec:conc}.

\section{Preliminaries}
\label{sec:prelim}

In this section, we provide a brief overview of SMD, of the binary classification model we will be studying, and the of Convex Gaussian Min-max Theorem (CGMT) which is fundamental to our analysis. 

\subsection{Stochastic Mirror Descent}
Let $L(w)$ be a separable loss function of some unknown weight (parameter) vector $w \in \mathbb{R}^d$, $L(w) = \sum_{i=1}^n L_i(w)$, where $L_i(w)$ is usually denoted as local loss function. To minimize $L(\cdot)$ over $w$, one can use classical methods such as SGD, which iteratively updates the weight vector estimate along the negative direction of the instantaneous gradient~\cite{robbins1951stochastic}. SMD is a family of optimization algorithms which includes SGD as a special case~\cite{nemirovski1983problem}. SMD uses a strictly convex differentiable potential function $\psi(\cdot)$ such that the weight vector updates are done in the ``mirrored'' domain determined by $\nabla\psi(\cdot)$
\begin{equation}\label{eq:MD}
\nabla\psi(w_i)=\nabla\psi(w_{i-1})-\eta\nabla L_i(w_{i-1}), \quad i\geq 1,
\end{equation}
where $\eta>0$ is the learning rate. Due to strict convexity, $\nabla\psi(\cdot)$ defines an invertible transformation. It is designed to exploit the geometrical structure of the optimization problem with the appropriate choice of potential function. In particular, the update rule in \eqref{eq:MD} can be equivalently written as 
\begin{equation}
    w_i = \mbox{arg}\min_w D_{\psi}(w,w_{i-1}) + \eta w_t^\top \nabla L(w_{i-1}),
\end{equation}
where $D_{\psi}(\cdot,\cdot)$ is the Bregman divergence with respect to $\psi(\cdot)$:
\begin{equation*}
    D_{\psi}(w,w_{i-1}) = \psi(w)-\psi(w_{i-1}) - \nabla\psi(w_{i-1})^\top(w-w_{i-1}).
\end{equation*}
Note that $D_\psi(\cdot,\cdot)$ is non-negative, convex in its first argument and  $D_\psi(w,w') = 0$ iff $w=w'$, due to strict convexity. Due to this construction, different choices of the potential function $\psi(\cdot)$ yield different optimization algorithms, e.g. $\psi(w) = \frac{1}{2}\|w\|_2^2$ gives SGD.

Recently, an array of works has documented and studied the implicit regularization induced by the MD and SMD algorithms used for optimization~\cite{gunasekar2018implicit,azizan2020study,gunasekar2018characterizing,azizan2019characterization,azizan2021stochastic,azizan2022explicit}. These works considered the setting of modern learning problems which are highly overparameterized, \textit{i.e.}, the number of parameters are significantly larger than the number of training data points. In particular, they consider a training set $\mathcal{D} = \{(x_i,y_i): i=1,\dots,n\}$ where $x_i\in\mathbb{R}^d$ are the inputs, and $y_i\in\mathbb{R}$ are the outputs obtained from an underlying distribution. The learning problem is to fit a model $f(x_i,w)$ (linear or nonlinear) that explains the data in $\mathcal{D}$ with some unknown weight vector $w \in \mathbb{R}^d$. In the overparameterized (interpolating) regime, the problem setting often has $d\gg n$, which results in a manifold of (uncountably infinitely many) solutions, $\mathcal{W}$ that interpolate the training data, \textit{i.e.}, $\mathcal{W}=\{w'\in\mathbb{R}^d\ |\ f(x_i,w')=y_i, (x_i,y_i) \in \mathcal{D}\}$. 

Defining a loss function on individual data points $L_i(w) = \ell(y_i - f(x_i,w))$, for some differentiable non-negative function $\ell(\cdot)$ with $\ell(0)=0$, the aforementioned works showed that SMD converges to the solution of
\begin{equation}
    \label{opt_implicit_bregman}
\begin{aligned}
& \underset{w}{\text{min}}
& & D_{\psi}(w,w_{0})\\
& \text{s.t.}
& & y_i = f(x_i,w),\quad i=1,\dots,n ,
\end{aligned}
\end{equation}
for any initialization $w_0$ if $f$ is linear, \textit{i.e.}, $f(x,w) = x^\top w$, while for a nonlinear $f$, SMD converges to a point on $\mathcal{W}$ which is very close to the solution of \eqref{opt_implicit_bregman}. Further, if $w_0 = \argmin_w \psi(w)$, SMD solves the problem in \eqref{opt_implicit_bregman} for $\psi(w)$ instead of $D_{\psi}(w,w_{0})$. This implicit regularization is clearly observed in practice (see \cite{azizan2021stochastic}) where the solutions of SMD display significantly different generalization performance on the unseen data. The relationship between the potential chosen for SMD and the generalization error is unclear. In this work, we take a step towards understanding this relationship by focusing on the simpler case of linear over-parametrized models and considering a particular binary classification problem that we describe below.

\subsection{Binary Classification for a Gaussian Mixture Model}

We consider a binary classification problem with two classes, where for class 1 the feature vector $x\in\mathbb{R}^d$ is drawn at random from $\mathcal{N}(\mu_1, \Sigma_1)$, with $\mu_1 \in\mathbb{R}^d$ the mean and $\Sigma_1\in\mathbb{R}^{d\times d}$ the covariance matrix, and where the label is chosen as $y=1$. Similarly, for class 2 the regressor is drawn from $\mathcal{N}(\mu_2, \Sigma_2)$ and has label $y=-1$. 

We will consider a linear classifier given by a weight vector $w\in\mathbb{R}^d$. In other words for a given feature vector $x$, we will declare that $x$ belongs to class 1 if $x^Tw>0$ and to class 2 if $x^Tw<0$. It is then straightforward to show the following result.
\begin{lemma}\label{lem:gen_error}
Given a weight vector $w$, and assuming the feature vectors are equally likely to be drawn from class 1 or class 2, the corresponding generalization error for the Gaussian mixture model with means $\mu_1$ and $\mu_2$ and covariance matrices $\Sigma_1, \Sigma_2$ is given by
$$E(w) = \frac{1}{2}Q(\frac{\mu_1^Tw}{\sqrt{w^T \Sigma_1 w}}) + \frac{1}{2}Q(-\frac{\mu_2^Tw}{ \sqrt{w^T \Sigma_2 w}})$$
where $Q(\cdot)$ is the integral of the tail of the standard normal distribution. 
\end{lemma}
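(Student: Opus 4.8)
The plan is to directly compute the misclassification probability by conditioning on the true class. Since the feature vectors are equally likely to come from either class, I would write the total error as a balanced average of the two conditional error probabilities,
\begin{equation*}
E(w) = \tfrac{1}{2}\,P(\text{misclassify}\mid \text{class }1) + \tfrac{1}{2}\,P(\text{misclassify}\mid \text{class }2).
\end{equation*}
Under the stated decision rule, a class-1 point $x$ (true label $y=1$) is misclassified exactly when $x^\top w < 0$, and a class-2 point (true label $y=-1$) is misclassified exactly when $x^\top w > 0$. The event $\{x^\top w = 0\}$ carries zero probability for each continuous Gaussian (assuming $w\neq 0$ and positive-definite covariances), so it can be safely ignored. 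Thus I reduce the problem to evaluating two tail probabilities of the scalar random variable $x^\top w$.

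The key observation is that $x^\top w$ is an affine functional of a Gaussian vector, hence itself Gaussian. Conditioned on class $k\in\{1,2\}$, we have $x\sim\mathcal{N}(\mu_k,\Sigma_k)$, so $x^\top w\sim\mathcal{N}(\mu_k^\top w,\; w^\top\Sigma_k w)$, using linearity of expectation for the mean and the standard quadratic-form identity $\mathrm{Var}(x^\top w)=w^\top\Sigma_k w$ for the variance. Standardizing with $Z=\frac{x^\top w-\mu_k^\top w}{\sqrt{w^\top\Sigma_k w}}\sim\mathcal{N}(0,1)$ then converts each conditional error probability into a standard-normal tail. For class 1,
\begin{equation*}
P(x^\top w < 0 \mid \text{class }1)
= P\!\left(Z < \frac{-\mu_1^\top w}{\sqrt{w^\top\Sigma_1 w}}\right)
= Q\!\left(\frac{\mu_1^\top w}{\sqrt{w^\top\Sigma_1 w}}\right),
\end{equation*}
where the last equality uses the symmetry relation $P(Z<-t)=P(Z>t)=Q(t)$.

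Applying the identical argument to class 2, the error event $\{x^\top w>0\}$ gives $P(Z > \frac{-\mu_2^\top w}{\sqrt{w^\top\Sigma_2 w}}) = Q(-\frac{\mu_2^\top w}{\sqrt{w^\top\Sigma_2 w}})$, which is precisely the second term. Substituting both expressions into the balanced average yields the claimed formula. I do not anticipate a genuine obstacle here: the computation is routine once one recognizes the Gaussianity of the projected score $x^\top w$. The only points requiring a word of care are the implicit non-degeneracy assumptions (so that $w^\top\Sigma_k w>0$ and the quotients are well defined) and consistently matching the sign conventions of $Q(\cdot)$ to the two decision regions, which is the source of the asymmetric signs in the two terms of $E(w)$.
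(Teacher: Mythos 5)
Your proof is correct and follows essentially the same route as the paper's: condition on the class, observe that $x^\top w$ is Gaussian with mean $\mu_k^\top w$ and variance $w^\top \Sigma_k w$, standardize, and use the symmetry $P(Z<-t)=Q(t)$. The only cosmetic difference is that the paper first decomposes $x=\mu_k+y_k$ before projecting, which yields the identical computation.
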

\begin{proof} See the Appendix.
\end{proof}

Now assume half the training data is drawn from class 1, i.e., 
\[ x_i\sim \mathcal{N}(\mu_1, \Sigma_1),~~~y_i = 1,~~~~~i=1,\ldots \frac{n}{2} \]
and the other half from class 2:
\[ x_i\sim \mathcal{N}(\mu_2, \Sigma_2),~~~y_i = -1,~~~~~i=\frac{n}{2}+1,\ldots n \]
Since we are in the over-parametrized regime, we will assume that $n<d$. In addition, we will assume that both $n, d\rightarrow\infty$. 

Defining the matrix of features and the vector of labels
\[ X = \left[\begin{array}{cccc} x_1 & x_2 & \ldots & x_n \end{array} \right] ~~~,~~~
y = \left[ \begin{array}{c} 1_{\frac{n}{2}} \\ -1_{\frac{n}{2}} \end{array} \right] \]
where $1_{\frac{n}{2}}$ is the all-one vector of length $\frac{n}{2}$, it is easy to see that, if initialized with $w_0 = \argmin_w \psi(w)$, SMD returns the weight vector that solves
\begin{equation}
    \label{opt_implicit}
\begin{aligned}
& \underset{w}{\text{min}}
& & \psi(w)\\
& \text{s.t.}
& & X^Tw = y
\end{aligned}
\end{equation}
In other words, SMD returns a weight vector $w$ that minimizes the potential $\psi(\cdot)$ among all weight vectors that interpolate the training data. 

The goal of this paper is to compute and characterize the generalization error of SMD using different potentials for the linear binary classifier with Gaussian mixture model. As can be seen from Lemma \ref{lem:gen_error}, this requires us to characterize the four quantities
\[ \mu_1^Tw~~,~~\mu_2^Tw~~,~~w^T\Sigma_1w~~,~~w^T\Sigma_2w \]
In fact, in much of the subsequent analysis, we shall assume $\Sigma_1 = \sigma_1^2I$ and $\Sigma_1 = \sigma_1^2I$, which implies we need only characterize the following three quantities 
\[ \mu_1^Tw~~,~~\mu_2^Tw~~,~~\|w\|^2. \]
Since the data model that we are considering is a Gaussian mixture, we shall make use of the Convex Gaussian Min-Max Theorem (CGMT)~\cite{thrampoulidis2015regularized}, which is a tight and extended version of a classical Gaussian comparison inequality~\cite{gordon1985some}. 

\subsection{Convex Gaussian Min-max Theorem (CGMT)}

The CGMT framework has been developed to analyze the properties of the solutions to non-smooth regularized convex optimization problems and has been successfully applied to characterize the precise performance in numerous applications such as $M$-estimators, generalized lasso, massive MIMO, phase retrieval, regularized logistic regression, adversarial training, and max-margin classifiers ~\cite{stojnic2013framework,thrampoulidis2018precise,salehi2019impact,thrampoulidis2015lasso,abbasi2019performance,salehi2018precise,miolane2021distribution,taheri2021fundamental, aubin2020generalization,javanmard2022precise,montanari2019generalization,salehi2020performance}. In this framework, a given challenging optimization problem denoted as the primary optimization $\textbf{(PO)}$ problem, is associated with a simplified auxiliary optimization $\textbf{(AO)}$ problem from which the optimal solution can be tightly inferred. Specifically, the $\textbf{(PO)}$ and $\textbf{(AO)}$ problems are defined as follows:
\begin{align}
\Phi(\mathbf{G})&:=\min _{\mathbf{w} \in \mathcal{S}_{\mathbf{w}}} \max _{\mathbf{u} \in \mathcal{S}_{\mathbf{u}}} \mathbf{u}^{\top} \mathbf{G w}+\psi(\mathbf{w}, \mathbf{u}) \!\! &\!\!{\textbf{(PO)}} \nonumber\\
\phi(\mathbf{g}, \mathbf{h}) & \!:=\!\!\min _{\mathbf{w} \in \mathcal{S}_{\mathbf{w}}} \max _{\mathbf{u} \in \mathcal{S}_{\mathbf{u}}}\|\mathbf{w}\|_2 \mathbf{g}^{\top} \mathbf{u}\!+\!\|\mathbf{u}\|_2 \mathbf{h}^{\top} \mathbf{w}\!+\!\psi(\mathbf{w}, \mathbf{u}) \!\! &\!\!{\textbf{(AO)}} \nonumber
\end{align}
where $\mathbf{G} \in \mathbb{R}^{m \times n}, \mathbf{g} \in \mathbb{R}^m, \mathbf{h} \in \mathbb{R}^n, \mathcal{S}_{\mathbf{w}} \subset \mathbb{R}^n, \mathcal{S}_{\mathbf{u}} \subset \mathbb{R}^m$ and $\psi: \mathbb{R}^n \times \mathbb{R}^m \rightarrow \mathbb{R}$. Denoting any optimal minimizers of $\textbf{(PO)}$ and $\textbf{(AO)}$ as $\mathbf{w}_{\Phi}:=\mathbf{w}_{\Phi}(\mathbf{G})$ and $\mathbf{w}_\phi:=\mathbf{w}_\phi(\mathbf{g}, \mathbf{h})$, respectively, CGMT result states the following.

\begin{theorem}[CGMT~\cite{thrampoulidis2018precise}] \label{thm:cgmt} In $\textbf{(PO)}$ and $\textbf{(AO)}$, let $\mathcal{S}_{\mathbf{w}}, \mathcal{S}_{\mathbf{u}}$ be convex compact sets, $\psi$ be continuous and convex-concave on $\mathcal{S}_{\mathbf{w}} \times \mathcal{S}_{\mathbf{u}}$, and, $\mathbf{G}, \mathbf{g}$ and $\mathbf{h}$ all have entries iid standard normal. Let $\mathcal{S}$ be an arbitrary open subset of $\mathcal{S}_{\mathbf{w}}$ and $\mathcal{S}^c := \mathcal{S}_{\mathbf{w}} \setminus \mathcal{S}$. Denote by $\Phi_{\mathcal{S}^c}(\mathbf{G}) $ and $\phi_{\mathcal{S}^c}(\mathbf{g}, \mathbf{h}) $ the optimal costs of $\textbf{(PO)}$ and $\textbf{(AO)}$ respectively when $\mathbf{w}$ is minimized over  $\mathcal{S}^c$. If there exist constants $\bar{\phi} < \bar{\phi}_{\mathcal{S}^c}$ such that $\phi(\mathbf{g}, \mathbf{h}) \stackrel{p}{\longrightarrow} \bar{\phi}$, and $\phi_{\mathcal{S}^c}(\mathbf{g}, \mathbf{h}) \stackrel{p}{\longrightarrow} \bar{\phi}_{\mathcal{S}^c}$, (converge in probability), then $\lim _{n \rightarrow \infty} \mathbb{P}\left(\mathbf{w}_{\Phi}(\mathbf{G}) \in \mathcal{S}\right)=1$.
\end{theorem}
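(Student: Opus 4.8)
The plan is to follow the established route to the CGMT (as in \cite{thrampoulidis2018precise}), which upgrades Gordon's Gaussian min-max comparison inequality to a tight, two-sided statement by exploiting the convex-concave structure of $\psi$; the localization claim then drops out of a cost-comparison argument. The strategy is to show that, with probability tending to one, the global optimal cost $\Phi(\mathbf{G})$ of the \textbf{(PO)} lies just above $\bar{\phi}$, while the cost $\Phi_{\mathcal{S}^c}(\mathbf{G})$ of the \textbf{(PO)} with $\mathbf{w}$ constrained to $\mathcal{S}^c$ stays above $\bar{\phi}_{\mathcal{S}^c}$. Since $\bar{\phi} < \bar{\phi}_{\mathcal{S}^c}$, any near-optimal minimizer is forced out of $\mathcal{S}^c$ and hence into $\mathcal{S}$.

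First I would set up the two Gaussian processes indexed by $(\mathbf{w},\mathbf{u})$, namely $X_{\mathbf{w},\mathbf{u}} = \mathbf{u}^\top \mathbf{G}\mathbf{w}$ and $Y_{\mathbf{w},\mathbf{u}} = \|\mathbf{w}\|_2\, \mathbf{g}^\top\mathbf{u} + \|\mathbf{u}\|_2\,\mathbf{h}^\top\mathbf{w}$, augmenting $X$ by an independent scalar term $\|\mathbf{w}\|_2\|\mathbf{u}\|_2\,\gamma$ with $\gamma \sim \mathcal{N}(0,1)$ so the two processes have matching variances. A direct covariance computation gives $\mathbb{E}[\tilde{X}_{\mathbf{w},\mathbf{u}}\tilde{X}_{\mathbf{w}',\mathbf{u}'}] = (\mathbf{w}^\top\mathbf{w}')(\mathbf{u}^\top\mathbf{u}') + \|\mathbf{w}\|_2\|\mathbf{w}'\|_2\|\mathbf{u}\|_2\|\mathbf{u}'\|_2$ and $\mathbb{E}[Y_{\mathbf{w},\mathbf{u}}Y_{\mathbf{w}',\mathbf{u}'}] = \|\mathbf{w}\|_2\|\mathbf{w}'\|_2(\mathbf{u}^\top\mathbf{u}') + \|\mathbf{u}\|_2\|\mathbf{u}'\|_2(\mathbf{w}^\top\mathbf{w}')$. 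The three hypotheses of Gordon's comparison inequality then hold: the variances match, the same-$\mathbf{w}$ condition holds with equality, and the distinct-$\mathbf{w}$ condition reduces to $(\|\mathbf{w}\|_2\|\mathbf{w}'\|_2 - \mathbf{w}^\top\mathbf{w}')(\|\mathbf{u}\|_2\|\mathbf{u}'\|_2 - \mathbf{u}^\top\mathbf{u}') \geq 0$, a product of two nonnegative factors by Cauchy--Schwarz. This yields the one-sided probabilistic comparison $\mathbb{P}(\Phi(\mathbf{G}) \le c) \le 2\,\mathbb{P}(\phi(\mathbf{g},\mathbf{h}) \le c)$ for every $c$, and the same bound with both problems restricted to $\mathcal{S}^c$ (closed, hence compact).

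Next I would combine this with the two convergence hypotheses. Applying the Gordon bound on $\mathcal{S}^c$ at $c = \bar{\phi}_{\mathcal{S}^c} - \varepsilon$ and using $\phi_{\mathcal{S}^c}(\mathbf{g},\mathbf{h}) \stackrel{p}{\longrightarrow} \bar{\phi}_{\mathcal{S}^c}$ shows $\mathbb{P}(\Phi_{\mathcal{S}^c}(\mathbf{G}) \le \bar{\phi}_{\mathcal{S}^c}-\varepsilon) \to 0$, i.e.\ the restricted cost is bounded below by $\bar{\phi}_{\mathcal{S}^c}-\varepsilon$ with high probability. In the opposite direction I would establish the matching \emph{upper} bound $\Phi(\mathbf{G}) < \bar{\phi} + \varepsilon$ with high probability, which is where the convexity assumptions enter and which is the crux of the argument. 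Because $\psi$ is convex in $\mathbf{w}$, concave in $\mathbf{u}$, and the constraint sets are convex and compact, one can interchange $\min_{\mathbf{w}}$ and $\max_{\mathbf{u}}$ (via Sion's minimax theorem) and apply Gordon's inequality to the resulting concave inner problem, obtaining the reverse comparison $\mathbb{P}(\Phi(\mathbf{G}) \ge c) \le 2\,\mathbb{P}(\phi(\mathbf{g},\mathbf{h}) \ge c)$; taking $c = \bar{\phi}+\varepsilon$ and using $\phi(\mathbf{g},\mathbf{h}) \stackrel{p}{\longrightarrow} \bar{\phi}$ gives the required upper bound.

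Finally, choosing $\varepsilon < (\bar{\phi}_{\mathcal{S}^c} - \bar{\phi})/2$ forces $\bar{\phi} + \varepsilon < \bar{\phi}_{\mathcal{S}^c} - \varepsilon$, so with probability tending to one $\Phi(\mathbf{G}) < \bar{\phi}+\varepsilon < \bar{\phi}_{\mathcal{S}^c}-\varepsilon < \Phi_{\mathcal{S}^c}(\mathbf{G})$. Were the \textbf{(PO)} minimizer in $\mathcal{S}^c$, the optimal cost would be at least $\Phi_{\mathcal{S}^c}(\mathbf{G})$, contradicting $\Phi(\mathbf{G}) < \Phi_{\mathcal{S}^c}(\mathbf{G})$; hence $\mathbf{w}_{\Phi}(\mathbf{G}) \in \mathcal{S}$ with probability tending to one. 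I expect the genuine difficulty to lie entirely in the reverse, convexity-driven inequality: Gordon's classical comparison delivers only the one-sided bound, and converting it into the tight two-sided statement requires the minimax interchange together with careful handling of the normalization/scaling issues that arise when one vector's norm is optimized out. This is precisely the technical content that distinguishes the CGMT from Gordon's original inequality.
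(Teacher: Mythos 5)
The paper gives no proof of this theorem: it is imported verbatim from \cite{thrampoulidis2018precise} and used as a black box, so there is nothing internal to compare your attempt against. Your sketch correctly reconstructs the argument from that reference --- the variance-matched auxiliary process with the extra $\|\mathbf{w}\|_2\|\mathbf{u}\|_2\,\gamma$ term, the Gordon/Slepian covariance check reducing to $(\|\mathbf{w}\|_2\|\mathbf{w}'\|_2-\mathbf{w}^\top\mathbf{w}')(\|\mathbf{u}\|_2\|\mathbf{u}'\|_2-\mathbf{u}^\top\mathbf{u}')\ge 0$, the one-sided bound applied both on $\mathcal{S}_{\mathbf{w}}$ and on the compact set $\mathcal{S}^c$, the convexity-driven reverse bound, and the final cost-gap localization. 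The one step worth stating more carefully is the reverse inequality: since the \textbf{(AO)} objective is not itself convex--concave, after the Sion interchange and the second application of Gordon one lands on $\max_{\mathbf{u}}\min_{\mathbf{w}}$ of the auxiliary objective, and it is the deterministic inequality $\max\min\le\min\max$ that relates this back to $\phi(\mathbf{g},\mathbf{h})$; you correctly flag this as the crux but your phrasing elides that last bridging step.
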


The probabilities in Theorem~\ref{thm:cgmt} are with respect to the randomness of $\mathbf{G}, \mathbf{g}$, and $\mathbf{h}$. Notice that from the assumptions in the theorem statement, we know that $\mathbf{w}_{\phi}(\mathbf{g}, \mathbf{h}) \in \mathcal{S}$ with probability approaching $1$ for the $\textbf{(AO)}$ problem. However, Theorem~\ref{thm:cgmt} gives a stronger result and concludes the same characterization for the solution of the seemingly different optimization problem $\textbf{(PO)}$. Appropriate choices of $S$ then allow us to conclude that the desired values of $\mu_1^Tw$, $\mu_2^Tw$, $\|w\|^2$ for $\textbf{(PO)}$ concentrate in the same domain as the same values for $\textbf{(AO)}$ as well as that empirical distributions of $w$ coincide for $\textbf{(PO)}$ and $\textbf{(AO)}$ provided that $n$ is big enough ($n \ge 100$ suffices in practice). In our analysis, we use it to characterize the empirical distributions of weights identified by SMD algorithms and to determine the desired generalization errors.


\subsection{Two Explicit Models}

As we shall subsequently see, the performance of SMD for various potentials will highly depend on the parameters $\mu_1$, $\mu_2$, $\Sigma_1$, $\Sigma_2$ of the Gaussian mixture model. In what follows we shall consider the following two explicit models.

\begin{itemize}
    \item {\bf Model 1}: In this model, we will assume that $\mu_1$ is an iid standard normal vector, $\mu_2 = \sqrt{1 - \epsilon^2}\mu_1 + \epsilon v$, where $v$ is another independent iid standard normal vector. This implies that both mean vectors have a length (roughly) equal to $\sqrt{d}$ and a relative angle 
    \[ \theta = \cos^{-1}(\sqrt{1-\epsilon^2}) = \sin^{-1}(\epsilon). \] 
    For simplicity, we will further take $\sigma_1 = \sigma_2 = 1$ and fix $\epsilon = 0.1$. 

    The parameter $\epsilon$ will allow us to control the angle between the two mean vectors and thereby the difficulty in separating the two classes. The difference between the two mean vectors is spread homogeneously across the entries of the vectors. As we shall subsequently see, through the theory and empirical results, this model gives better generalization results on SGD than $\ell_1$-SMD.
    \item {\bf Model 2}: In this model, we take $\mu_{1i} = \mu_{2i}$ and iid standard normal for $i > 1$, and $\mu_{11} = -\mu_{2i} = t = 2$, $\sigma_1 = \sigma_2 = 1$. In other words, the mean vectors of the two classes differ in only a single component. As expected, linear classification for this model is much more conducive to a sparsifying regularizer and both the theory and empirical results will show that, in terms of generalization performance, $\ell_1$-SMD significantly outperforms SGD. 
\end{itemize}

\subsection{Some Useful Lemmas}

The following lemmas will be of use for making calculations specific to models $1$ and $2$.

\begin{lemma}\label{lem:expected_gen}
    Let $X \sim \mathcal{N}(0, \sigma^2)$. Then $\mathbb{E}[(|X|-1)^2 \mathbbm{1}_{|X| > 1} ] = 2(\sigma^2+1)Q(\frac{1}{\sigma}) - \frac{2\sigma}{\sqrt{2\pi}}e^{-\frac{1}{2\sigma^2}} $.
\end{lemma}

\begin{proof} See Appendix.
\end{proof}

\begin{lemma}\label{lem:sqrt}
    The following equality holds for any $x > 0$:
    $$ \sqrt{x} = \min_{\beta > 0} \frac{1}{2\beta} + \frac{\beta x}{2} $$
\end{lemma}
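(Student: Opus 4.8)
The plan is to treat the right-hand side as a one-dimensional optimization problem in the scalar variable $\beta$ and to show that, for each fixed $x > 0$, the infimum over $(0,\infty)$ is attained in the interior and equals exactly $\sqrt{x}$. I would give the argument via the arithmetic--geometric mean (AM--GM) inequality, which is the cleanest, and note that elementary calculus gives the same answer as a cross-check.

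For the AM--GM approach: for fixed $x > 0$ and any $\beta > 0$, both summands $\tfrac{1}{2\beta}$ and $\tfrac{\beta x}{2}$ are strictly positive, so the inequality $a + b \ge 2\sqrt{ab}$ yields
$$ \frac{1}{2\beta} + \frac{\beta x}{2} \ge 2\sqrt{\frac{1}{2\beta}\cdot\frac{\beta x}{2}} = 2\sqrt{\frac{x}{4}} = \sqrt{x} . $$
This shows $\sqrt{x}$ is a lower bound for every admissible $\beta$, hence for the infimum. To upgrade the infimum to a genuine minimum, I would observe that equality in AM--GM holds precisely when the two terms coincide, i.e. $\tfrac{1}{2\beta} = \tfrac{\beta x}{2}$, equivalently $\beta^2 = 1/x$. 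The resulting $\beta^\star = 1/\sqrt{x}$ lies in $(0,\infty)$ because $x>0$, so equality is achievable, and substituting $\beta^\star$ back confirms the value is $\sqrt{x}$.

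As an alternative one can differentiate $g(\beta) = \tfrac{1}{2\beta} + \tfrac{\beta x}{2}$ to get $g'(\beta) = -\tfrac{1}{2\beta^2} + \tfrac{x}{2}$, solve $g'(\beta) = 0$ for the unique positive critical point $\beta^\star = 1/\sqrt{x}$, and note $g''(\beta) = \beta^{-3} > 0$ on $(0,\infty)$, so $g$ is strictly convex there and the critical point is the global minimizer, with $g(\beta^\star) = \tfrac{\sqrt{x}}{2} + \tfrac{\sqrt{x}}{2} = \sqrt{x}$.

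There is no real obstacle here, as the statement is an elementary variational identity. The only point deserving a word of care is attainment: one should confirm that the optimal $\beta^\star = 1/\sqrt{x}$ respects the constraint $\beta > 0$, and that $g(\beta) \to \infty$ as $\beta \to 0^+$ and as $\beta \to \infty$, so the minimum is genuinely achieved in the interior rather than merely approached at the boundary.
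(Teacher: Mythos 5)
Your proof is correct. The paper's own argument is exactly your ``alternative'': differentiate $g(\beta) = \tfrac{1}{2\beta} + \tfrac{\beta x}{2}$, solve for the critical point $\beta^\star = 1/\sqrt{x}$, and substitute back to get $\sqrt{x}$ (the paper does not explicitly check the second-order condition or the boundary behavior, implicitly relying on convexity). Your primary route via AM--GM is genuinely different in structure and slightly tighter as a piece of reasoning: it establishes $\sqrt{x}$ as a global lower bound for every admissible $\beta$ in one line, and then attainment follows from the equality case, so there is no need to argue separately that the critical point is a global minimizer. The calculus route buys you the explicit minimizer directly, which is what the paper actually uses downstream (the variable $\beta$ introduced by this lemma becomes one of the scalar optimization variables in the CGMT analysis), but both arguments are complete and either would serve.
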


\begin{proof} See Appendix.
\end{proof}

\section{Main Results}
\label{sec:main}

We begin with a theorem that  holds for arbitrary mirror $\psi$. It will later be used and specialized to study $\ell_1$-SMD and SGD. 

\begin{theorem}\label{psi_SMD_CGMT}
The empirical distribution of the parameters identified by SMD with a mirror $\psi$ applied to the Gaussian mixture model with means $\mu_1, \mu_2$ and covariance matrices  $\sigma_1^2I, \sigma_2^2I$ matches the empirical distribution of $\hat{w}$ obtained by solving the following optimization problem for $w$:
$$ \max_{\alpha \ge 0} \min_{w, \beta \ge 0} \max_{\gamma_1, \gamma_2} \psi(w) + \alpha g^Tw + \frac{\alpha}{2\beta} + \frac{\alpha \beta n}{2}\Vert w \Vert_2^2 + \frac{\gamma_1(\mu_1^Tw-1)}{\sigma_1} + \frac{\gamma_2(\mu_2^Tw+1)}{\sigma_2}  - \frac{\gamma_1 ^ 2 + \gamma_2 ^ 2}{\alpha \beta n}$$
where $g\in\mathbb{R}^d$ is a vector of iid standard normal entries. The values of $\|w\|^2$, $\mu_1^Tw$ and $\mu_2^Tw$ inferred from this optimization problem coincide with the same values for parameters found by SMD.
\end{theorem}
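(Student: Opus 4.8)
The plan is to recognize \eqref{opt_implicit} as a CGMT primary optimization problem and then read off the displayed expression as a variational reformulation of its auxiliary problem. First I would dualize the interpolation constraint, writing the SMD objective as $\min_{w}\max_{u\in\mathbb{R}^n}\psi(w)+u^\top(X^\top w-y)$. Decomposing the feature matrix as $X=M+ZD$, where $M$ collects the class means ($\mu_1$ in the first $n/2$ columns and $\mu_2$ in the remaining ones), $D=\mathrm{diag}(\sigma_1,\dots,\sigma_1,\sigma_2,\dots,\sigma_2)$, and $Z\in\mathbb{R}^{d\times n}$ has iid standard normal entries, the bilinear part becomes $u^\top DZ^\top w$. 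Since $DZ^\top$ is not standard normal, I would substitute $v=Du$ so that the Gaussian term reads $v^\top Z^\top w$ with $Z^\top$ genuinely iid standard normal, while the remaining terms $\psi(w)+v^\top D^{-1}(M^\top w-y)$ are convex in $w$ and linear (hence concave) in $v$. This matches \textbf{(PO)} with $\mathbf{w}=w$, $\mathbf{u}=v$, $\mathbf{G}=Z^\top$, so Theorem~\ref{thm:cgmt} lets me replace $v^\top Z^\top w$ by $\|w\|_2\,g^\top v+\|v\|_2\,h^\top w$, with $g\in\mathbb{R}^n$ and $h\in\mathbb{R}^d$ iid standard normal.

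Next I would simplify the resulting auxiliary problem. Grouping the $v$-linear terms gives $v^\top p(w)+\|v\|_2\,h^\top w$ with $p(w)=\|w\|_2\,g+D^{-1}(M^\top w-y)$; by the block structure the $i$-th coordinate of $p(w)$ equals $\|w\|_2\,g_i+c_1$ for a class-$1$ index and $\|w\|_2\,g_i+c_2$ for a class-$2$ index, where $c_1=(\mu_1^\top w-1)/\sigma_1$ and $c_2=(\mu_2^\top w+1)/\sigma_2$. Optimizing over the direction of $v$ and writing $\alpha=\|v\|_2\ge0$ collapses the maximization over $v$ to $\max_{\alpha\ge0}\alpha\big(\|p(w)\|_2+h^\top w\big)$, in which $\alpha$ is the multiplier of the induced constraint. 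Applying Lemma~\ref{lem:sqrt} to $\|p(w)\|_2=\sqrt{\|p(w)\|_2^2}$ introduces $\beta>0$ and turns $\alpha\|p(w)\|_2$ into $\frac{\alpha}{2\beta}+\frac{\alpha\beta}{2}\|p(w)\|_2^2$, trading the square root for an expression quadratic in the underlying quantities. This already yields the $\frac{\alpha}{2\beta}$ term and, together with $\alpha h^\top w$, the linear term $\alpha g^\top w$ of the statement (the $g\in\mathbb{R}^d$ there being our $h$).

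The remaining task is to evaluate $\frac{\alpha\beta}{2}\|p(w)\|_2^2$ as $n\to\infty$. Because $\mathbb{E}[g_i^2]=1$ and the cross terms vanish in expectation, a uniform concentration argument over the feasible set gives $\|p(w)\|_2^2\approx n\|w\|_2^2+\frac{n}{2}(c_1^2+c_2^2)$, so $\frac{\alpha\beta}{2}\|p(w)\|_2^2\approx\frac{\alpha\beta n}{2}\|w\|_2^2+\frac{\alpha\beta n}{4}(c_1^2+c_2^2)$. The first summand is the $\frac{\alpha\beta n}{2}\|w\|_2^2$ of the statement; for the second I would dualize each square via the conjugate identity $\frac{\alpha\beta n}{4}c^2=\max_{\gamma}\big(\gamma c-\frac{\gamma^2}{\alpha\beta n}\big)$, introducing $\gamma_1,\gamma_2$ and, after substituting $c_1,c_2$, producing exactly $\frac{\gamma_1(\mu_1^\top w-1)}{\sigma_1}+\frac{\gamma_2(\mu_2^\top w+1)}{\sigma_2}-\frac{\gamma_1^2+\gamma_2^2}{\alpha\beta n}$. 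Collecting all terms and reordering the optimizations by minimax duality (the substitutions are engineered precisely so that the inner objective is concave in $\alpha$ and convex in $(w,\beta)$) reproduces the displayed $\max_{\alpha\ge0}\min_{w,\beta\ge0}\max_{\gamma_1,\gamma_2}$ problem.

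Finally, for the distributional conclusion I would invoke the set-based part of Theorem~\ref{thm:cgmt}: taking $\mathcal{S}$ to be sub/super-level sets of $\mu_1^\top w$, $\mu_2^\top w$ and $\|w\|_2^2$ (and, for the empirical distribution, preimages of bounded-Lipschitz test functions applied coordinatewise) shows these quantities concentrate to the same limits for the minimizer of the primary problem as for the minimizer of the reformulated auxiliary problem. I expect the principal obstacle to be the compactness hypothesis of the CGMT: the weight vector $w$ and the dual variable $v$ are a priori unbounded, so a localization argument is needed to confine them to convex compact sets with high probability without moving the optimizer, and it is this same boundedness that legitimizes the uniform concentration of $\|p(w)\|_2^2$ and hence the passage to the deterministic equivalent. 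Justifying the minimax interchange across the bilinear coupling $\beta\|w\|_2^2$ is the other delicate point.
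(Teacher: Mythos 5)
Your proposal follows essentially the same route as the paper's proof: dualize the interpolation constraint, absorb the class variances into the dual variable so the Gaussian part of $X$ becomes standard normal, apply the CGMT, optimize out the direction of the dual vector, concentrate the resulting norm (dropping the cross term), remove the square root via Lemma~\ref{lem:sqrt}, and introduce $\gamma_1,\gamma_2$ by conjugate duality of the squares (the paper does this equivalently through auxiliary variables $a_1,a_2$ that are then eliminated). The argument is correct, and you additionally flag the compactness/localization and minimax-interchange issues that the paper leaves implicit.
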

\begin{proof} See Appendix.
\end{proof}
The point of the above theorem is that once the distribution of $w$ is identified from the optimization, the quantities $\|w\|^2$, $\mu_1^Tw$, and $\mu_2^Tw$, necessary to obtain the generalization error, can be computed and the histogram for $w$ will recover the weight histogram for SMD. 

The next two theorems arise as applications of Theorem \ref{psi_SMD_CGMT} when we specialize to 
$\psi(w) = \Vert w \Vert_2^2$ and $\psi(w) = \Vert w \Vert_1$ and study the optimization problem in more detail. These more detailed analyses will allow us to determine the generalization errors of SGD and $\ell_1$-SMD, respectively. 

\begin{theorem}\label{SGD_CGMT}
The empirical distribution of the parameters identified by SGD applied to the Gaussian mixture model with means $\mu_1, \mu_2$ and covariance matrices $\sigma_1^2I, \sigma_2^2I$ matches the empirical distribution of $\hat{w}$ given by 
\begin{eqnarray*}
\hat{w} & = & - \frac{\alpha g}{2 + \alpha \beta n} + \frac{\alpha\beta n}{4\Delta}(\frac{\alpha\beta n}{4}(\Vert \mu_2 \Vert_2^2 + \mu_1^T\mu_2) + \sigma_1^2(\frac{\alpha \beta n}{2} + 1))\mu_1 \\
& & - \frac{\alpha\beta n}{4\Delta}(\frac{\alpha\beta n}{4}(\Vert \mu_1 \Vert_2^2 + \mu_1^T\mu_2) + \sigma_2^2(\frac{\alpha \beta n}{2} + 1))\mu_2
\end{eqnarray*}
where $g\in\mathbb{R}^d$ is a vector of iid standard normal entries and $\alpha$ and $\beta$ are defined as solutions to the following two-dimensional scalar optimization problem: 
$$ \max_{\alpha \ge 0} \min_{\beta \ge 0} -\frac{\alpha^2d}{4(1 + \frac{\alpha\beta n}{2})} - \frac{(\alpha\beta n )^2}{16\Delta}(\Vert \mu_1 \Vert ^2 + \Vert \mu_2 \Vert ^ 2 - (\frac{\sigma_1^2}{\sigma_2^2} + \frac{\sigma_2^2}{\sigma_1^2})\mu_2^T\mu_1) - $$ 
$$ - \frac{(\alpha \beta n)^3}{32\Delta}(\Vert \mu_1 \Vert ^2 + \Vert \mu_2 \Vert ^ 2 - (\frac{\sigma_1^2}{\sigma_2^2} + \frac{\sigma_2^2}{\sigma_1^2})\mu_2^T\mu_1 + (\frac{1}{2\sigma_1^2} + \frac{1}{2\sigma_2^2})(\Vert \mu_1 \Vert^2 \Vert \mu_2 \Vert ^ 2 - (\mu_2^T\mu_1)^2))$$
Here $\Delta$ is also a function of $\alpha$ and $\beta$ and is defined as 
\begin{eqnarray*}
\Delta & = & (\frac{\alpha\beta n}{4})^2(4\sigma_1^2\sigma_2^2 + \Vert \mu_1 \Vert^2 \Vert \mu_2 \Vert ^ 2 - (\mu_2^T\mu_1)^2 + 2(\sigma_1^2\Vert \mu_1 \Vert ^2 + \sigma_2^2\Vert \mu_2 \Vert ^ 2 )) \\
& & + \frac{\alpha \beta n}{4}(2\sigma_1^2 + 2\sigma_2^2 + \sigma_1^2\Vert \mu_1 \Vert^2 + \sigma_2^2\Vert \mu_2 \Vert ^ 2) + \sigma_1^2\sigma_2^2
\end{eqnarray*}

The values of $\|w\|^2$, $\mu_1^Tw$ and $\mu_2^Tw$ inferred from this optimization problem coincide with the same values for parameters found by SGD.
\end{theorem}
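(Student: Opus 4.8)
The plan is to specialize the variational problem of Theorem~\ref{psi_SMD_CGMT} to $\psi(w)=\|w\|_2^2$ and then eliminate the inner variables one block at a time, exploiting the fact that the resulting objective is jointly quadratic in $w$ and in $(\gamma_1,\gamma_2)$. With $\psi(w)=\|w\|_2^2$ the $w$-dependent part is $(1+\tfrac{\alpha\beta n}{2})\|w\|_2^2$ together with the linear form $\big(\alpha g+\tfrac{\gamma_1}{\sigma_1}\mu_1+\tfrac{\gamma_2}{\sigma_2}\mu_2\big)^\top w$, so the inner $\min_w$ is a strictly convex quadratic with a closed-form minimizer. I would first carry out this minimization, writing $A:=1+\tfrac{\alpha\beta n}{2}$ and
\[ \hat w=-\frac{1}{2A}\Big(\alpha g+\frac{\gamma_1}{\sigma_1}\mu_1+\frac{\gamma_2}{\sigma_2}\mu_2\Big), \]
whose first summand already produces the term $-\alpha g/(2+\alpha\beta n)$ appearing in the statement.

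Substituting $\hat w$ back replaces the $w$-dependent part by $-\tfrac{1}{4A}\big\|\alpha g+\tfrac{\gamma_1}{\sigma_1}\mu_1+\tfrac{\gamma_2}{\sigma_2}\mu_2\big\|_2^2$. Expanding this norm produces $\alpha^2\|g\|_2^2$, the cross terms $\alpha g^\top\mu_1$ and $\alpha g^\top\mu_2$, and a quadratic form in $(\gamma_1,\gamma_2)$ built from $\|\mu_1\|^2$, $\|\mu_2\|^2$ and $\mu_1^\top\mu_2$. Here I would invoke concentration: since $g$ has iid standard normal entries and is independent of $\mu_1,\mu_2$, one has $\|g\|_2^2\to d$ and $g^\top\mu_i$ is negligible relative to the leading order (in probability), so $\alpha^2\|g\|_2^2$ may be replaced by $\alpha^2 d$ and the cross terms dropped. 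This is exactly the step that makes the first term $-\alpha^2 d/\big(4(1+\tfrac{\alpha\beta n}{2})\big)$ emerge, and it is essentially the only place where the randomness is used; everything downstream is deterministic algebra.

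What remains is $\max_{\gamma_1,\gamma_2}$ of a concave quadratic $-\gamma^\top M\gamma+c^\top\gamma$ with $\gamma=(\gamma_1,\gamma_2)^\top$, where the $2\times2$ matrix $M$ collects the $-\tfrac{1}{4A}$-weighted Gram entries of $\{\mu_1/\sigma_1,\mu_2/\sigma_2\}$ plus the $\tfrac{1}{\alpha\beta n}I$ contribution, and $c$ comes from the $\pm1$ label offsets $-\gamma_1/\sigma_1+\gamma_2/\sigma_2$. The maximizer is $\gamma^\star=\tfrac12 M^{-1}c$ and the optimal value is $\tfrac14 c^\top M^{-1}c$, both obtained from the explicit inverse of a $2\times2$ matrix. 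The key bookkeeping observation is that $\det M$ equals, up to an explicit factor involving $\sigma_1^2\sigma_2^2$ and a power of $4A\cdot\alpha\beta n$, precisely the quantity $\Delta$ in the statement; one verifies this by clearing denominators and grouping by powers of $\alpha\beta n$. Feeding $\gamma^\star$ into $\hat w$ then yields the stated $\mu_1$- and $\mu_2$-coefficients, and collecting all $\gamma$-independent terms produces the reduced scalar $\max_\alpha\min_\beta$ objective with its $(\alpha\beta n)^2/\Delta$ and $(\alpha\beta n)^3/\Delta$ pieces.

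I expect the main obstacle to be purely computational: the $2\times2$ inversion must be pushed through symbolically while keeping the dependence on $\sigma_1,\sigma_2$ and on $\alpha\beta n$ straight, and the resulting rational expression must be reorganized so that the common denominator is recognizable as $\Delta$ and the numerators split into the displayed $(\alpha\beta n)^2$ and $(\alpha\beta n)^3$ terms. A secondary, more conceptual point needing care is the justification of the concentration step: one must argue that replacing $\|g\|_2^2$ by $d$ and discarding $g^\top\mu_i$ perturbs the optimal cost and the relevant statistics $\|w\|^2$, $\mu_1^\top w$, $\mu_2^\top w$ of the optimizer only by vanishing amounts, which is where the CGMT framework of Theorem~\ref{thm:cgmt} and the natural scaling $\alpha=O(1/\sqrt d)$, $\alpha\beta n=O(1)$ enter.
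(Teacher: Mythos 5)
Your proposal follows essentially the same route as the paper: specialize Theorem \ref{psi_SMD_CGMT} to $\psi(w)=\|w\|_2^2$, minimize the resulting strictly convex quadratic in $w$ in closed form, concentrate $\|g\|_2^2 \to d$ and discard the $g^\top\mu_i$ cross terms, and then solve the remaining $2\times 2$ concave quadratic in $(\gamma_1,\gamma_2)$, identifying its determinant (up to explicit scaling) with $\Delta$. The only difference is cosmetic: you evaluate the optimal cost directly as $\tfrac14 c^\top M^{-1}c$ from the explicit $2\times 2$ inverse, whereas the paper avoids that substitution by swapping the order of optimization back and using the fact that a quadratic's value at its minimizer is half its linear term evaluated there --- both yield the same reduced scalar objective and the same $\hat w$.
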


\begin{proof}
    See Appendix. 
\end{proof}

Note that ${\hat w}$ is simply a non-zero mean Gaussian vector and so $\|w\|^2$, $\mu_1^Tw$, and $\mu_2^Tw$ can be readily computed, thereby allowing the evaluation of the generalization error via Lemma \ref{lem:gen_error}. 

\begin{theorem}\label{1SMD_CGMT}
The empirical distribution of the parameters identified by $\ell_1$-SMD applied to the Gaussian mixture model with means $\mu_1, \mu_2$ and covariance matrices $\sigma_1^2I, \sigma_2^2I$ matches the empirical distribution of $\hat{w}$ given by 
$$\hat{w}_i = -(\alpha \beta n)^{-1}sign(\frac{\gamma_1}{\sigma_1} \mu_{1i} + \frac{\gamma_2}{\sigma_2} \mu_{2i} + \alpha g_i)\max(0, |\frac{\gamma_1}{\sigma_1} \mu_{1i} + \frac{\gamma_2}{\sigma_2}\gamma_2 \mu_{2i} + \alpha g_i| - 1),$$ 
where the $g_i$ are iid standard normal and $\gamma_1, \gamma_2 ,\alpha$ and $\beta$ are defined as solutions of the following four-dimensional optimization problem: 
$$\max_{\alpha \ge 0} \min_{\beta \ge 0} \max_{\gamma_1, \gamma_2} \frac{\gamma_2}{\sigma_2} - \frac{\gamma_1}{\sigma_1}  + \frac{\alpha}{2\beta} - \frac{\gamma_1 ^ 2 + \gamma_2 ^ 2}{\alpha \beta n} - \sum_i \frac{|\alpha g_i  + \frac{\gamma_1}{\sigma_1}\mu_{1i} + \frac{\gamma_2}{\sigma_2}\mu_{2i}| - 1}{2\alpha \beta n} \max(0, |\alpha g_i  + \frac{\gamma_1}{\sigma_1}\mu_{1i} + \frac{\gamma_2}{\sigma_2}\mu_{2i}| - 1) $$
The values of $\|w\|^2$, $\mu_1^Tw$ and $\mu_2^Tw$ inferred from this optimization problem coincide with the same values for parameters found by $\ell_1$- SMD.
\end{theorem}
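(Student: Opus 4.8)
The plan is to specialize Theorem~\ref{psi_SMD_CGMT} to the potential $\psi(w)=\|w\|_1$ and then carry out the inner minimization over $w$ in closed form. Fixing $\alpha,\beta>0$, the objective is jointly convex in $w$ (the $\ell_1$ term, the quadratic $\tfrac{\alpha\beta n}{2}\|w\|_2^2$, and the linear terms are all convex for $\alpha,\beta\ge0$) and jointly concave in $(\gamma_1,\gamma_2)$ (an affine part minus the strictly concave $-\tfrac{\gamma_1^2+\gamma_2^2}{\alpha\beta n}$). The quadratic penalty makes the objective coercive in $w$ and the negative quadratic makes it coercive in $(\gamma_1,\gamma_2)$, so the relevant extremizations are effectively over compact convex sets and Sion's minimax theorem permits interchanging $\min_w$ and $\max_{\gamma_1,\gamma_2}$. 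Reordering accordingly, the $\min_{w,\beta}\max_{\gamma_1,\gamma_2}$ of Theorem~\ref{psi_SMD_CGMT} becomes $\min_\beta\max_{\gamma_1,\gamma_2}\min_w$, with the $w$-minimization now innermost.

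First I would collect the $w$-dependent terms and observe that the inner minimization
$$\min_w\ \|w\|_1 + \sum_i c_i w_i + \frac{\alpha\beta n}{2}\|w\|_2^2,\qquad c_i := \alpha g_i + \frac{\gamma_1}{\sigma_1}\mu_{1i} + \frac{\gamma_2}{\sigma_2}\mu_{2i},$$
decouples completely across the coordinates $w_i$. Each coordinate then solves the scalar problem $\min_{w_i}\, |w_i| + c_i w_i + \tfrac{\alpha\beta n}{2}w_i^2$, which is the standard soft-thresholding problem: examining stationarity separately on $\{w_i>0\}$, $\{w_i<0\}$, and at $w_i=0$ yields the minimizer
$$\hat w_i = -(\alpha\beta n)^{-1}\,\mathrm{sign}(c_i)\,\max\!\left(0,|c_i|-1\right),$$
which is exactly the claimed closed form for $\hat w_i$.

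Next I would substitute $\hat w_i$ back to obtain the optimal value of the inner minimization. A short computation, treating the regimes $|c_i|\le1$ and $|c_i|>1$ separately and checking that the $c_i>1$ and $c_i<-1$ branches combine, shows that each coordinate contributes $-\tfrac{(|c_i|-1)}{2\alpha\beta n}\max(0,|c_i|-1)$, so the full inner minimum equals $-\sum_i \tfrac{(|c_i|-1)}{2\alpha\beta n}\max(0,|c_i|-1)$. Restoring the $w$-independent terms $\tfrac{\gamma_2}{\sigma_2}-\tfrac{\gamma_1}{\sigma_1}+\tfrac{\alpha}{2\beta}-\tfrac{\gamma_1^2+\gamma_2^2}{\alpha\beta n}$ together with the outer $\max_{\alpha\ge0}\min_{\beta\ge0}\max_{\gamma_1,\gamma_2}$ reproduces precisely the stated four-dimensional scalar optimization, and the identification of the inferred values of $\|w\|^2$, $\mu_1^Tw$, and $\mu_2^Tw$ then follows directly from Theorem~\ref{psi_SMD_CGMT}.

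The step I expect to be most delicate is the back-substitution that produces the clean value function $-\tfrac{(|c_i|-1)}{2}\max(0,|c_i|-1)$ uniformly: care is needed to confirm that the two thresholding branches collapse into the single $\max(0,\cdot)$ expression quoted in the objective. A secondary technical point is rigorously justifying the minimax interchange, namely verifying that the coercivity supplied by the quadratic terms (for $\alpha,\beta>0$, where the optimum is attained) confines the problem to compact convex sets so that Sion's theorem genuinely applies rather than being assumed.
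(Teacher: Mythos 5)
Your proposal is correct and follows essentially the same route as the paper's proof: specialize Theorem~\ref{psi_SMD_CGMT} to $\psi(w)=\|w\|_1$, move the minimization over $w$ innermost, decouple coordinatewise, solve the scalar soft-thresholding problem to obtain $\hat w_i$, and substitute the per-coordinate optimal value back into the objective. The only cosmetic differences are that the paper parametrizes the scalar problem via $u_i=|w_i|$ and $\epsilon_i=\mathrm{sign}(w_i)$ rather than a case analysis on the sign of $w_i$, and that you are more explicit than the paper about justifying the minimax interchange.
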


\begin{proof}
    See Appendix. 
\end{proof}

We should comment that in Theorem \ref{SGD_CGMT} the optimization for the parameters $\alpha$ and $\beta$ is deterministic. However, in Theorem \ref{1SMD_CGMT}, the optimization for $\alpha$, $\beta$, $\gamma_1$, and $\gamma_2$ is stochastic. However, if we make some statistical assumptions on the means $\mu_1$ and $\mu_2$, then by the law of large numbers the term 
\[ \sum_i \frac{|\alpha g_i  + \frac{\gamma_1}{\sigma_1}\mu_{1i} + \frac{\gamma_2}{\sigma_2}\mu_{2i}| - 1}{2\alpha \beta n}\max(0, |\alpha g_i  + \frac{\gamma_1}{\sigma_1}\mu_{1i} + \frac{\gamma_2}{\sigma_2}\mu_{2i}| - 1)\]
will concentrate. In fact, this is why we consider the explicit Models 1 and 1 described earlier. \\

\section{Specific Results}
\label{sec:spec}

We now specialize the previous theorems to the Models 1 and 2 described earlier. This will allow us to get explicit expressions for the generalization error and to compare the performances of SGD and $\ell_1$-SMD.

\subsection{Model 1}

Recall here that $\mu_1$ is standard normal, $\mu_2 = \sqrt{1 - \epsilon^2}\mu_1 + \epsilon v$, where $v$ is an independent standard normal vector. We further assume $\sigma_1 = \sigma_2 = 1$. 

\begin{lemma}\label{expected_model_1}
Denote $\sigma^2 = \gamma_1^2 + \gamma_2 ^ 2 + \alpha ^ 2 + 2\gamma_1\gamma_2\sqrt{1 - \epsilon^2}$. The following equality holds:
$$\mathbb{E}_{\mu_{1i}, v_i, g_i}[(|\gamma_1 \mu_{1i} + \gamma_2 \mu_{2i} + \alpha g_i|-1) \max(0, |\gamma_1 \mu_{1i} + \gamma_2 \mu_{2i} + \alpha g_i|-1) ] = 2(\sigma^2+1)Q(\frac{1}{\sigma}) - \frac{2\sigma}{\sqrt{2\pi}}e^{-\frac{1}{2\sigma^2}}$$

\end{lemma}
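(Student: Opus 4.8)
The plan is to reduce the left-hand side to a direct application of Lemma \ref{lem:expected_gen} by showing that the random variable inside the expectation is a centered Gaussian whose variance is exactly $\sigma^2$. First I would introduce the shorthand $Z = \gamma_1 \mu_{1i} + \gamma_2 \mu_{2i} + \alpha g_i$ and substitute the Model 1 relation $\mu_{2i} = \sqrt{1-\epsilon^2}\,\mu_{1i} + \epsilon v_i$, rewriting $Z = (\gamma_1 + \gamma_2\sqrt{1-\epsilon^2})\mu_{1i} + \gamma_2\epsilon v_i + \alpha g_i$. Since $\mu_{1i}, v_i, g_i$ are independent standard normals, $Z$ is a linear combination of independent $\mathcal{N}(0,1)$ variables and is therefore itself a zero-mean Gaussian.

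Next I would compute the variance of $Z$ by summing the squares of the three coefficients, $\mathrm{Var}(Z) = (\gamma_1 + \gamma_2\sqrt{1-\epsilon^2})^2 + \gamma_2^2\epsilon^2 + \alpha^2$. Expanding the first square and using $(1-\epsilon^2) + \epsilon^2 = 1$ to collapse the $\epsilon$-dependent parts of the $\gamma_2^2$ terms, I would obtain $\mathrm{Var}(Z) = \gamma_1^2 + \gamma_2^2 + \alpha^2 + 2\gamma_1\gamma_2\sqrt{1-\epsilon^2} = \sigma^2$, precisely the quantity defined in the statement. Hence $Z \sim \mathcal{N}(0,\sigma^2)$.

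The final step is the algebraic observation that the integrand collapses to a squared-deviation indicator: when $|Z| \le 1$ the factor $\max(0,|Z|-1)$ vanishes, and when $|Z| > 1$ it equals $|Z|-1$, so $(|Z|-1)\max(0,|Z|-1) = (|Z|-1)^2\,\mathbbm{1}_{|Z|>1}$. Taking expectations and invoking Lemma \ref{lem:expected_gen} with $X = Z$ then yields the stated closed form $2(\sigma^2+1)Q(\tfrac{1}{\sigma}) - \tfrac{2\sigma}{\sqrt{2\pi}}e^{-1/(2\sigma^2)}$.

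There is no substantive obstacle in this lemma: the entire content is the variance bookkeeping together with the recognition that the product reduces to the squared-deviation indicator already handled by Lemma \ref{lem:expected_gen}. The only point requiring care is verifying that the cross term $2\gamma_1\gamma_2\sqrt{1-\epsilon^2}$ survives while the remaining $\epsilon$ dependence cancels exactly, which is what ties the Model 1 correlation structure to the effective one-dimensional variance $\sigma^2$.
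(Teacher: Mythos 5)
Your proof is correct and follows exactly the paper's argument: substitute the Model 1 relation to see that $\gamma_1\mu_{1i}+\gamma_2\mu_{2i}+\alpha g_i$ is a centered Gaussian with variance $\sigma^2$, rewrite the integrand as $(|X|-1)^2\mathbbm{1}_{|X|>1}$, and apply Lemma \ref{lem:expected_gen}. No differences worth noting.
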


\begin{proof}
    Note that $$\gamma_1 \mu_{1i} + \gamma_2 \mu_{2i} + \alpha g_i = (\gamma_1 + \sqrt{1 - \epsilon^2} \gamma_2)\mu_{1i} +  \epsilon \gamma_2 v + \alpha g_i \sim \mathcal{N}(0, \gamma_1^2 + \gamma_2 ^ 2 + \alpha ^ 2 + 2\gamma_1\gamma_2\sqrt{1 - \epsilon^2})$$

     Denote $X = \gamma_1 \mu_{1i} + \gamma_2 \mu_{2i} + \alpha g_i \sim \mathcal{N}(0, \sigma^2)$.

    The initial expectation can then be rewritten in the following way and found using Lemma \ref{lem:expected_gen}:

    $$\mathbb{E}[(|X|-1)^2 \mathbbm{1}_{|X| > 1}] = 2(\sigma^2+1)Q(\frac{1}{\sigma}) - \frac{2\sigma}{\sqrt{2\pi}}e^{-\frac{1}{2\sigma^2}}$$
\end{proof}

\begin{lemma}\label{dot_prods}

Under the terminology from Theorem \ref{psi_SMD_CGMT}, the following equalities hold:

$$\gamma_1 = \frac{\alpha \beta n}{2\sigma_1}(\mu_1^Tw - 1), \gamma_2 = \frac{\alpha \beta n}{2\sigma_2}(\mu_2^Tw + 1)$$

\end{lemma}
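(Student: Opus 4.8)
The plan is to read off the two claimed identities as the first-order stationarity conditions of the inner maximization over $\gamma_1$ and $\gamma_2$ in the saddle-point objective of Theorem \ref{psi_SMD_CGMT}. The only terms of that objective that depend on $\gamma_1, \gamma_2$ are
$$\frac{\gamma_1(\mu_1^Tw-1)}{\sigma_1} + \frac{\gamma_2(\mu_2^Tw+1)}{\sigma_2} - \frac{\gamma_1^2 + \gamma_2^2}{\alpha \beta n},$$
so with $w, \alpha, \beta$ held fixed the objective is a separable concave quadratic in $(\gamma_1,\gamma_2)$, provided $\alpha,\beta,n>0$: the coefficient $-1/(\alpha\beta n)$ of each squared term is strictly negative. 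Hence the inner maximum is attained at a unique interior stationary point, and differentiating under the $\max$ is legitimate.

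First I would compute $\partial/\partial\gamma_1$ of the objective, obtaining $\frac{\mu_1^Tw-1}{\sigma_1} - \frac{2\gamma_1}{\alpha\beta n}$, and set it to zero; rearranging gives $\gamma_1 = \frac{\alpha\beta n}{2\sigma_1}(\mu_1^Tw-1)$. The computation for $\gamma_2$ is identical and yields $\gamma_2 = \frac{\alpha\beta n}{2\sigma_2}(\mu_2^Tw+1)$. Because the objective is strictly concave in each of $\gamma_1,\gamma_2$, these stationary points are the global maximizers, so the optimal $\gamma_1,\gamma_2$ realizing the saddle point satisfy exactly the stated relations.

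There is essentially no obstacle here: the lemma is a direct consequence of setting the $\gamma$-gradient of the Theorem \ref{psi_SMD_CGMT} objective to zero. The only point deserving a word of justification is that the optimal $(\gamma_1,\gamma_2)$ lie in the interior of the unconstrained domain, so that the first-order conditions are necessary and sufficient; this is immediate from the strict concavity recorded above. I would therefore present the proof as a one-line stationarity calculation for each variable.
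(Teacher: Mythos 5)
Your proof is correct and is exactly the paper's argument: the paper's proof of this lemma is the one-line remark that it ``follows immediately from taking derivatives by $\gamma_1$ and $\gamma_2$ and equating them to $0$.'' Your additional observation about strict concavity guaranteeing an interior global maximizer is a harmless (and welcome) elaboration of the same stationarity calculation.
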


\begin{proof}

Follows immediately from taking derivatives by $\gamma_1$ and $\gamma_2$ and equating them to $0$. 
    
\end{proof}



\begin{remark}\label{model1_approx}

Since we work in the asymptotic regime $d \to \infty$, we can replace 
\[ \sum_i \frac{(|\gamma_1 \mu_{1i} + \gamma_2 \mu_{2i} + \alpha g_i|-1)}{2 \alpha \beta n} \max(0, |\gamma_1 \mu_{1i} + \gamma_2 \mu_{2i} + \alpha g_i|-1) \]
from the objective of Theorem \ref{1SMD_CGMT} by $\frac{d(\sigma^2+1)}{\alpha \beta n}Q(\frac{1}{\sigma}) - \frac{d\sigma}{\sqrt{2\pi}\alpha \beta n}e^{-\frac{1}{2\sigma^2}}$. Note that this expression is invariant to the transformation $(\gamma_1, \gamma_2) \to (-\gamma_2, -\gamma_1)$ and so is the rest of the objective from Theorem \ref{1SMD_CGMT}. Since this objective is strictly concave in $\gamma_1$ and $\gamma_2$, we conclude that the optimal parameters must satisfy $\gamma_2 = -\gamma_1$. All this being said, we experiment with the following three-dimensional optimization problem in the numerical part of the work related to model 1:

$$ \max_{\alpha \ge 0} \min_{\beta \ge 0} \max_{\gamma_1} \frac{\alpha}{2\beta}  - \frac{2\gamma_1 ^ 2}{\alpha \beta n} - 2\gamma_1 - \frac{d(\sigma^2+1)}{\alpha \beta n}Q(\frac{1}{\sigma}) + \frac{d\sigma}{\sqrt{2\pi}\alpha \beta n}e^{-\frac{1}{2\sigma^2}}$$
$$\text{where } \sigma^2 = 2\gamma_1^2(1 -\sqrt{1 - \epsilon^2})  + \alpha ^ 2$$

The same lemma suggests us an approximation for $\Vert w \Vert_2^2$, 
since 
\[ w_i^2 = \frac{(|\gamma_1 \mu_{1i} + \gamma_2 \mu_{2i} + \alpha g_i|-1)^2}{(\alpha \beta n)^2}\mathbbm{1_{|\gamma_1 \mu_{1i} + \gamma_2 \mu_{2i} + \alpha g_i| > 1}}$$ we will approximate $\Vert w \Vert_2^2 = \sum_i w_i^2$ as $$ \Vert w \Vert_2^2 \approx  \frac{2d(\sigma^2+1)}{(\alpha \beta n)^2}Q(\frac{1}{\sigma}) - \frac{2d\sigma}{\sqrt{2\pi}(\alpha \beta n)^2}e^{-\frac{1}{2\sigma^2}}. \]
Finally, we find dot products $\mu_1^Tw$ and $\mu_2^Tw$ using Lemma \ref{dot_prods}:
$$\mu_1^Tw = \frac{2\gamma_1}{\alpha\beta n} + 1,  \mu_2^Tw = -\frac{2\gamma_1}{\alpha \beta n} - 1$$

\end{remark}

\subsection{Model 2}

Recall, as before that $\mu_{1i} = \mu_{2i}$ is iid standard normal for $i > 1$ and that $\mu_{11} = -\mu_{2i} = t = 2$. For simplicity, we take $\sigma_1 = \sigma_2 = 1$.

\begin{remark}\label{model_2_approx}

Analogously to Remark \ref{model1_approx}, we will use Lemma \ref{expected_model_1} to approximate the objective by a simpler expression. In this case, the first term of the sum 
\[ \sum_i \frac{(|\gamma_1 \mu_{1i} + \gamma_2 \mu_{2i} + \alpha g_i|-1)}{2 \alpha \beta n} \max(0, |\gamma_1 \mu_{1i} + \gamma_2 \mu_{2i} + \alpha g_i|-1) \]  
cannot be replaced by anything rather than $\frac{(|\gamma_1 t - \gamma_2 t + \alpha g_1|-1)}{2 \alpha \beta n} \max(0, |\gamma_1 t - \gamma_2 t + \alpha g_1|-1)$ itself, so we just leave it this way. Note that for $i > 1$ we have $\gamma_1 \mu_{1i} + \gamma_2 \mu_{2i} + \alpha g_i = (\gamma_1 + \gamma_2) \mu_{2i} + \alpha g_i \sim \mathcal{N}(0, (\gamma_1 + \gamma_2)^2 + \alpha^2)$. Thus, we replace the sum of the other terms by $ \frac{(d-1)(\sigma^2+1)}{\alpha \beta n}Q(\frac{1}{\sigma}) - \frac{(d-1)\sigma}{\sqrt{2\pi}\alpha \beta n}e^{-\frac{1}{2\sigma^2}}$ following the same reasoning as in Remark \ref{model1_approx}, where $\sigma^2 = (\gamma_1 + \gamma_2)^2 + \alpha^2$ this time. Note that this again makes the entire objective invariant to the same transformation $(\gamma_1, \gamma_2) \to (-\gamma_2, -\gamma_1)$. We conclude that $\gamma_2 = -\gamma_1$ and $\sigma = \alpha$, which leads us to:

$$ \max_{\alpha \ge 0} \min_{\beta \ge 0} \max_{\gamma_1} \frac{\alpha}{2\beta}  - \frac{2\gamma_1 ^ 2}{\alpha \beta n} - 2\gamma_1 - \frac{(|2\gamma_1 t + \alpha g_1|-1)}{2 \alpha \beta n} \max(0, |2\gamma_1 t + \alpha g_1|-1) - $$
$$ - \frac{(d-1)(\alpha^2+1)}{\alpha \beta n}Q(\frac{1}{\alpha}) + \frac{(d-1)}{\sqrt{2\pi} \beta n}e^{-\frac{1}{2\alpha^2}}$$

Again analogously to Remark \ref{model1_approx}, we obtain:
$$\Vert w \Vert_2^2 \approx  \frac{(|2\gamma_1 t + \alpha g_1|-1)^2}{(\alpha \beta n)^2}\mathbbm{1}_{|2\gamma_1 t + \alpha g_1| > 1} + \frac{2(d-1)(\alpha^2+1)}{(\alpha \beta n)^2}Q(\frac{1}{\alpha}) - \frac{2(d-1)}{\sqrt{2\pi}\alpha( \beta n)^2}e^{-\frac{1}{2\alpha^2}}$$
$$\mu_1^Tw = \frac{2\gamma_1}{\alpha\beta n} + 1,  \mu_2^Tw = -\frac{2\gamma_1}{\alpha \beta n} - 1$$

\end{remark}

\section{Numerical simulations}
\label{sec:num}

This section provides a comparison between classification errors obtained by training linear models using SGD and $\ell_1$-SMD and evaluating the corresponding performances empirically to the classification errors predicted by CGMT. We used code from a publically available repository \url{https://github.com/SahinLale/StochasticMirrorDescent} provided by authors of \cite{azizan2021stochastic} with minor changes for training. CGMT predictions were calculated numerically by solving the corresponding optimization problems via a grid search and then using Remarks \ref{model1_approx} and \ref{model_2_approx} along with Lemma \ref{lem:gen_error} to evaluate the error. In the tables presented below, CGMT $\ell_1$-SMD stands for the classification error predicted by CGMT for $\ell_1$ stochastic mirror descent,  empirical $\ell_1$ stands for the test error evaluated for a trained $\ell_1$-SMD initialized near $0$, CGMT SGD and empirical SGD signify the same values, but for SGD. The prediction of $\ell_1$-SMD and the empirical results for $\ell_1$-SMD appear to not depend too dramatically on the realizations of $\mu_1$ and $\mu_2$. That is, they seem to be well-concentrated for model $1$. However, they were less so for model $2$, so we averaged both over $5$ evaluations each. The prediction of CGMT SGD and the empirical performance of SGD were observed to be well-concentrated for both models.  As the reader can see, the match between the empirical and CGMT-predicted SGD performances is better than between the same quantities for $\ell_1$-SMD. We believe that this arises because the latter is more challenging numerically, as the corresponding expression for CGMT involves a $3$ - dimensional optimization instead of $2$ -dimensional and is more sensitive to parameter changes. Apart from that, evaluating the performance of $\ell_1$-SMD empirically is also more challenging because it requires more iterations to converge. That is, there always is a chance that the generalization errors could match more closely if the algorithm was run for more iterations. In either event, the match between the theoretical and empirical generalization errors is quite good in all cases. 

As can be seen from the Tables, for Model 1, SGD has slightly superior performance compared to $\ell_1$-SMD. This is reasonable, since the difference between the two mean vectors is spread homogeneously across the entries of the vectors. 

However, for Model 2, $\ell_1$-SMD has significantly better performance. Again, this is expected because the mean vectors of the two classes differ in only a single component. Therefore linear classification for this model is much more conducive to a sparsifying regularizer. 

These results clearly demonstrate that the generalization performance of linear classifiers on binary Gaussian mixture models tangibly depends on the mirror used by the training algorithm and on the model the data obeys. We believe this general principle to hold for deep networks as well, although it will merit a much more difficult and detailed analysis.

\subsection{Model 1}

\begin{center}
\begin{tabular}{||c c c c c c||} 
 \hline
 n & d & CGMT $\ell_1$ & Empirical $\ell_1$ & CGMT SGD & Empirical SGD \\ [0.5ex] 
 \hline\hline
 500 & 1000 & 0.242 & 0.275 & 0.202 & 0.191\\
 \hline
 200 & 1000 & 0.315 & 0.306 & 0.199 & 0.194\\  
 \hline 
 100 & 1000 & 0.370 & 0.346  & 0.253 & 0.249\\ 
 \hline

 1000 & 10000 & 0.023  & 0.012 & 0 & 0 \\ [1ex] 
 
 \hline

\end{tabular}
\end{center}

\subsection{Model 2}

\begin{center}
\begin{tabular}{||c c c c c c||} 
 \hline
 n & d & CGMT $\ell_1$ & Empirical $\ell_1$ & CGMT SGD & Empirical SGD \\ [0.5ex] 
 \hline\hline
 100 & 1000 & 0.056 & 0.059  & 0.155 & 0.152\\ 
 \hline
1000 & 10000 & 0.045 & 0.051 & 0.152 & 0.150\\
 \hline

 500 & 10000 & 0.048  & 0.034 & 0.211 & 0.218 \\ [1ex] 
 
 \hline

\end{tabular}
\end{center}

\section{Conclusion}
\label{sec:conc}

In this paper we studied the problem of linear classification of binary Gaussian mixture models using SMD training with general potentials. Using a CGMT analysis we are able to find expressions for the generalization error. Numerical simulations show a good agreement between the theory and empirical results. In particular, we observe that the generalization performance depends heavily on the mirror used in SMD, as well as on the data model. We exhibited two models, one for which SGD was superior and one for which $\ell_1$-SMD is so. There are several directions in which these results can be extended. One is to find explicit expressions for the generalization performance of other mirrors, most notably $\ell_\infty$. Another is to extend the classification problem beyond the binary case to more complicated Gaussian mixtures. Finally, we consider the work performed here to be a small step in the direction of understanding the generalization performance of deep networks.

\acks{We are grateful to Sahin Lale for helping us with numerical experiments.}

\bibliography{main}

\begin{thebibliography}{25}
\providecommand{\natexlab}[1]{#1}
\providecommand{\url}[1]{\texttt{#1}}
\expandafter\ifx\csname urlstyle\endcsname\relax
  \providecommand{\doi}[1]{doi: #1}\else
  \providecommand{\doi}{doi: \begingroup \urlstyle{rm}\Url}\fi

\bibitem[Abbasi et~al.(2019)Abbasi, Salehi, and Hassibi]{abbasi2019performance}
Ehsan Abbasi, Fariborz Salehi, and Babak Hassibi.
\newblock Performance analysis of convex data detection in mimo.
\newblock In \emph{ICASSP 2019-2019 IEEE International Conference on Acoustics,
  Speech and Signal Processing (ICASSP)}, pages 4554--4558. IEEE, 2019.

\bibitem[Aubin et~al.(2020)Aubin, Krzakala, Lu, and
  Zdeborov{\'a}]{aubin2020generalization}
Benjamin Aubin, Florent Krzakala, Yue Lu, and Lenka Zdeborov{\'a}.
\newblock Generalization error in high-dimensional perceptrons: Approaching
  bayes error with convex optimization.
\newblock \emph{Advances in Neural Information Processing Systems},
  33:\penalty0 12199--12210, 2020.

\bibitem[Azizan and Hassibi(2019{\natexlab{a}})]{azizan2019characterization}
Navid Azizan and Babak Hassibi.
\newblock A characterization of stochastic mirror descent algorithms and their
  convergence properties.
\newblock In \emph{ICASSP 2019-2019 IEEE International Conference on Acoustics,
  Speech and Signal Processing (ICASSP)}, pages 5167--5171. IEEE,
  2019{\natexlab{a}}.

\bibitem[Azizan and Hassibi(2019{\natexlab{b}})]{azizan2019stochastic}
Navid Azizan and Babak Hassibi.
\newblock Stochastic gradient/mirror descent: Minimax optimality and implicit
  regularization.
\newblock In \emph{International Conference on Learning Representations
  (ICLR)}, 2019{\natexlab{b}}.

\bibitem[Azizan et~al.(2020)Azizan, Lale, and Hassibi]{azizan2020study}
Navid Azizan, Sahin Lale, and Babak Hassibi.
\newblock A study of generalization of stochastic mirror descent algorithms on
  overparameterized nonlinear models.
\newblock In \emph{ICASSP 2020-2020 IEEE International Conference on Acoustics,
  Speech and Signal Processing (ICASSP)}, pages 3132--3136. IEEE, 2020.

\bibitem[Azizan et~al.(2021)Azizan, Lale, and Hassibi]{azizan2021stochastic}
Navid Azizan, Sahin Lale, and Babak Hassibi.
\newblock Stochastic mirror descent on overparameterized nonlinear models.
\newblock \emph{IEEE Transactions on Neural Networks and Learning Systems},
  33\penalty0 (12):\penalty0 7717--7727, 2021.

\bibitem[Azizan et~al.(2022)Azizan, Lale, and Hassibi]{azizan2022explicit}
Navid Azizan, Sahin Lale, and Babak Hassibi.
\newblock Explicit regularization via regularizer mirror descent.
\newblock \emph{arXiv preprint arXiv:2202.10788}, 2022.

\bibitem[Gordon(1985)]{gordon1985some}
Yehoram Gordon.
\newblock Some inequalities for gaussian processes and applications.
\newblock \emph{Israel Journal of Mathematics}, 50:\penalty0 265--289, 1985.

\bibitem[Gunasekar et~al.(2017)Gunasekar, Woodworth, Bhojanapalli, Neyshabur,
  and Srebro]{gunasekar2017implicit}
Suriya Gunasekar, Blake~E Woodworth, Srinadh Bhojanapalli, Behnam Neyshabur,
  and Nati Srebro.
\newblock Implicit regularization in matrix factorization.
\newblock In \emph{Advances in Neural Information Processing Systems}, pages
  6152--6160, 2017.

\bibitem[Gunasekar et~al.(2018{\natexlab{a}})Gunasekar, Lee, Soudry, and
  Srebro]{gunasekar2018characterizing}
Suriya Gunasekar, Jason Lee, Daniel Soudry, and Nathan Srebro.
\newblock Characterizing implicit bias in terms of optimization geometry.
\newblock In \emph{International Conference on Machine Learning}, pages
  1827--1836, 2018{\natexlab{a}}.

\bibitem[Gunasekar et~al.(2018{\natexlab{b}})Gunasekar, Lee, Soudry, and
  Srebro]{gunasekar2018implicit}
Suriya Gunasekar, Jason Lee, Daniel Soudry, and Nathan Srebro.
\newblock Implicit bias of gradient descent on linear convolutional networks.
\newblock \emph{arXiv preprint arXiv:1806.00468}, 2018{\natexlab{b}}.

\bibitem[Javanmard and Soltanolkotabi(2022)]{javanmard2022precise}
Adel Javanmard and Mahdi Soltanolkotabi.
\newblock Precise statistical analysis of classification accuracies for
  adversarial training.
\newblock \emph{The Annals of Statistics}, 50\penalty0 (4):\penalty0
  2127--2156, 2022.

\bibitem[Miolane and Montanari(2021)]{miolane2021distribution}
L{\'e}o Miolane and Andrea Montanari.
\newblock The distribution of the lasso: Uniform control over sparse balls and
  adaptive parameter tuning.
\newblock \emph{The Annals of Statistics}, 49\penalty0 (4):\penalty0
  2313--2335, 2021.

\bibitem[Montanari et~al.(2019)Montanari, Ruan, Sohn, and
  Yan]{montanari2019generalization}
Andrea Montanari, Feng Ruan, Youngtak Sohn, and Jun Yan.
\newblock The generalization error of max-margin linear classifiers:
  High-dimensional asymptotics in the overparametrized regime.
\newblock \emph{arXiv preprint arXiv:1911.01544}, 2019.

\bibitem[Nemirovski and Yudin(1983)]{nemirovski1983problem}
Arkadii Nemirovski and David~Borisovich Yudin.
\newblock Problem complexity and method efficiency in optimization.
\newblock 1983.

\bibitem[Robbins and Monro(1951)]{robbins1951stochastic}
Herbert Robbins and Sutton Monro.
\newblock A stochastic approximation method.
\newblock \emph{The annals of mathematical statistics}, pages 400--407, 1951.

\bibitem[Salehi et~al.(2018)Salehi, Abbasi, and Hassibi]{salehi2018precise}
Fariborz Salehi, Ehsan Abbasi, and Babak Hassibi.
\newblock A precise analysis of phasemax in phase retrieval.
\newblock In \emph{2018 IEEE International Symposium on Information Theory
  (ISIT)}, pages 976--980. IEEE, 2018.

\bibitem[Salehi et~al.(2019)Salehi, Abbasi, and Hassibi]{salehi2019impact}
Fariborz Salehi, Ehsan Abbasi, and Babak Hassibi.
\newblock The impact of regularization on high-dimensional logistic regression.
\newblock \emph{Advances in Neural Information Processing Systems}, 32, 2019.

\bibitem[Salehi et~al.(2020)Salehi, Abbasi, and Hassibi]{salehi2020performance}
Fariborz Salehi, Ehsan Abbasi, and Babak Hassibi.
\newblock The performance analysis of generalized margin maximizers on
  separable data.
\newblock In \emph{International conference on machine learning}, pages
  8417--8426. PMLR, 2020.

\bibitem[Stojnic(2013)]{stojnic2013framework}
Mihailo Stojnic.
\newblock A framework to characterize performance of lasso algorithms.
\newblock \emph{arXiv preprint arXiv:1303.7291}, 2013.

\bibitem[Taheri et~al.(2021)Taheri, Pedarsani, and
  Thrampoulidis]{taheri2021fundamental}
Hossein Taheri, Ramtin Pedarsani, and Christos Thrampoulidis.
\newblock Fundamental limits of ridge-regularized empirical risk minimization
  in high dimensions.
\newblock In \emph{International Conference on Artificial Intelligence and
  Statistics}, pages 2773--2781. PMLR, 2021.

\bibitem[Thrampoulidis et~al.(2015{\natexlab{a}})Thrampoulidis, Abbasi, and
  Hassibi]{thrampoulidis2015lasso}
Christos Thrampoulidis, Ehsan Abbasi, and Babak Hassibi.
\newblock Lasso with non-linear measurements is equivalent to one with linear
  measurements.
\newblock \emph{Advances in Neural Information Processing Systems}, 28,
  2015{\natexlab{a}}.

\bibitem[Thrampoulidis et~al.(2015{\natexlab{b}})Thrampoulidis, Oymak, and
  Hassibi]{thrampoulidis2015regularized}
Christos Thrampoulidis, Samet Oymak, and Babak Hassibi.
\newblock Regularized linear regression: A precise analysis of the estimation
  error.
\newblock In \emph{Conference on Learning Theory}, pages 1683--1709. PMLR,
  2015{\natexlab{b}}.

\bibitem[Thrampoulidis et~al.(2018)Thrampoulidis, Abbasi, and
  Hassibi]{thrampoulidis2018precise}
Christos Thrampoulidis, Ehsan Abbasi, and Babak Hassibi.
\newblock Precise error analysis of regularized $ m $-estimators in high
  dimensions.
\newblock \emph{IEEE Transactions on Information Theory}, 64\penalty0
  (8):\penalty0 5592--5628, 2018.

\bibitem[Zhang et~al.(2016)Zhang, Bengio, Hardt, Recht, and
  Vinyals]{zhang2016understanding}
Chiyuan Zhang, Samy Bengio, Moritz Hardt, Benjamin Recht, and Oriol Vinyals.
\newblock Understanding deep learning requires rethinking generalization.
\newblock \emph{arXiv preprint arXiv:1611.03530}, 2016.

\end{thebibliography}

\appendix


\section{Technical proofs}

\begin{proof}{\bf of Lemma \ref{lem:gen_error}}
    By definition, 
    
    $$E(w) = \frac{1}{2}\mathbb{P}_{x \sim \mathcal{N}(\mu_1, \Sigma_1})(w^Tx < 0) + \frac{1}{2}\mathbb{P}_{x \sim \mathcal{N}(\mu_2, \Sigma_2)}(w^Tx \ge 0)$$

    Rewrite $x = \mu_1 + y_1$ for $x \sim \mathcal{N}(\mu_1, \Sigma_1)$ and $x = \mu_2 + y_2$ for $x \sim \mathcal{N}(\mu_2, \Sigma_2)$. Note that $y_1 \sim \mathcal{N}(0, \Sigma_1)$ and $y_2 \sim \mathcal{N}(0, \Sigma_2)$. We obtain:

    $$E(w) = \frac{1}{2}\mathbb{P}_{y_1 \sim \mathcal{N}(0, \Sigma_1)}(w^Ty_1 < -\mu_1^Tw) + \frac{1}{2}\mathbb{P}_{y_2 \sim \mathcal{N}(0, \Sigma_2)}(w^Ty_2 \ge - \mu_2^Tw)$$

    Since $z_1 = w^Ty_1 \sim \mathcal{N}(0, w^T\Sigma_1w)$ and $z_2 = w^Ty_2 \sim \mathcal{N}(0, w^T\Sigma_2w)$ we have:

    $$E(w) = \frac{1}{2}\mathbb{P}_{z_1 \sim \mathcal{N}(0, w^T\Sigma_1w)}(z_1 < -\mu_1^Tw) + \frac{1}{2}\mathbb{P}_{z_2 \sim \mathcal{N}(0, w^T\Sigma_2w)}(z_2 \ge - \mu_2^Tw) = $$
    $$ = \frac{1}{2}\mathbb{P}_{z'_1 \sim \mathcal{N}(0, 1)}(z'_1 < - \frac{\mu_1^Tw}{\sqrt{w^T \Sigma_1 w}}) + \frac{1}{2}\mathbb{P}_{z'_2 \sim \mathcal{N}(0, 1)}(z'_2 \ge -\frac{\mu_2^Tw}{ \sqrt{w^T \Sigma_2 w}}) = $$
    $$= \frac{1}{2}\mathbb{P}_{z'_1 \sim \mathcal{N}(0, 1)}(z'_1 > \frac{\mu_1^Tw}{\sqrt{w^T \Sigma_1 w}}) + \frac{1}{2}\mathbb{P}_{z'_2 \sim \mathcal{N}(0, 1)}(z'_2 \ge -\frac{\mu_2^Tw}{ \sqrt{w^T \Sigma_2 w}}) = $$
    $$= \frac{1}{2}Q(\frac{\mu_1^Tw}{\sqrt{w^T \Sigma_1 w}}) + \frac{1}{2}Q(-\frac{\mu_2^Tw}{ \sqrt{w^T \Sigma_2 w}})$$
\end{proof}

\begin{proof}{\bf of Lemma \ref{lem:expected_gen}} Denote $Y = \frac{X}{\sigma} \sim \mathcal{N}(0,1)$.

$$\mathbb{E}[(|X|-1)^2 \mathbbm{1}_{|X| > 1}] = \mathbb{E}[X^2 \mathbbm{1}_{|X| > 1}] - 2\mathbb{E}[|X| \mathbbm{1}_{|X| > 1}] + \mathbb{E}[\mathbbm{1}_{|X| > 1}] = 2\mathbb{E}[X^2 \mathbbm{1}_{X > 1}] - 4\mathbb{E}[X \mathbbm{1}_{X > 1}] + 2\mathbb{E}[\mathbbm{1}_{X > 1}] = $$
$$ = 2\sigma^2\mathbb{E}[Y^2 \mathbbm{1}_{Y > \frac{1}{\sigma}}] - 4 \sigma\mathbb{E}[Y\mathbbm{1}_{Y > \frac{1}{\sigma}}] + 2\mathbb{E}[\mathbbm{1}_{Y > \frac{1}{\sigma}}] = \frac{2\sigma^2}{\sqrt{2\pi}} \int^{+\infty}_{\frac{1}{\sigma}}t^2 e^{-\frac{t^2}{2}} dt - \frac{4\sigma}{\sqrt{2\pi}}\int^{+ \infty}_{\frac{1}{\sigma}} t e^{-\frac{t^2}{2}} dt + 2Q(\frac{1}{\sigma}) = $$
$$ = -\frac{2\sigma^2}{\sqrt{2\pi}} \int^{+\infty}_{\frac{1}{\sigma}}t d e^{-\frac{t^2}{2}}  - \frac{4\sigma}{\sqrt{2\pi}}\int^{+ \infty}_{\frac{1}{\sigma}} e^{-\frac{t^2}{2}} d\frac{t^2}{2} + 2Q(\frac{1}{\sigma}) = - \frac{2\sigma^2}{\sqrt{2\pi}} e^{-\frac{t^2}{2}}t _{|_{\frac{1}{\sigma}}^{+\infty}}  + \frac{2\sigma^2}{\sqrt{2\pi}} \int^{+\infty}_{\frac{1}{\sigma}} e^{-\frac{t^2}{2}} dt - $$   
$$- \frac{4\sigma}{\sqrt{2\pi}}\int^{+ \infty}_{\frac{1}{2\sigma^2}} e^{-z} dz + 2Q(\frac{1}{\sigma}) = \frac{2\sigma}{\sqrt{2\pi}}e^{-\frac{1}{2\sigma^2}} + 2\sigma^2Q(\frac{1}{\sigma}) - \frac{4\sigma}{\sqrt{2\pi}}e^{-\frac{1}{2\sigma^2}} + 2Q(\frac{1}{\sigma}) =$$
$$ = 2(\sigma^2+1)Q(\frac{1}{\sigma}) - \frac{2\sigma}{\sqrt{2\pi}}e^{-\frac{1}{2\sigma^2}}  $$
    
\end{proof}

\begin{proof}{\bf of Lemma \ref{lem:sqrt}}
    Differentiate the objective from the right hand side by $\beta$:

    $$ \frac{d}{d\beta}(\frac{1}{2\beta} + \frac{\beta x}{2}) = \frac{1}{-2\beta^2} + \frac{x}{2} $$

    We conclude that $\frac{1}{2\beta} + \frac{\beta x}{2}$ is minimized at $\beta = \frac{1}{\sqrt{x}}$. The value the objective takes at this point is 
    
$\frac{1}{2\beta} + \frac{\beta x}{2} = \frac{1}{\frac{2}{\sqrt{x}}} + \frac{\frac{1}{\sqrt{x}}x}{2} = \sqrt{x}$
\end{proof}

\begin{proof}{\bf of Theorem \ref{psi_SMD_CGMT}}

Using equation \eqref{opt_implicit} it is straightforward to see that SMD with mirror $\psi$ converges to $\hat{w}$ solving the following optimization problem for $w$: 
$$ \min_w \max_{\lambda} \psi(w) + \lambda^T (X^T w - y)$$

Denote by $M$ the $d \times n$ matrix satisfying $M_{ij} = \mu_{1i}$ if $i \le n / 2$ and $M_{ij} = \mu_{2i}$ otherwise. In words, $M$ is the matrix whose first $n / 2$ columns are $\mu_1$ and whose last $n / 2$ columns are $\mu_2$. Denote the random matrix with independent standard Gaussian entries by $\tilde{X}$. Also denote the vectors consisting of the first $n/2$ and last $n/2$ coordinates of $\lambda$ by $\lambda_1$ and $\lambda_2$ respectively so that $\lambda = \begin{pmatrix} \lambda_1 \\ \lambda_2 \end{pmatrix}$. Finally, define $\tilde{\lambda_i} = \sigma_i\lambda_i, i = 0, 1$ and $\tilde{\lambda} = \begin{pmatrix} \tilde{\lambda_1} \\ \tilde{\lambda_2} \end{pmatrix}$.  We then have 
$$\lambda^TX^Tw - \lambda^Ty= \tilde{\lambda}^T\tilde{X}^Tw + \lambda^TM^Tw - \lambda^Ty = \tilde{\lambda}^T\tilde{X}^Tw + \begin{pmatrix} \lambda_1 \\
\lambda_2 \end{pmatrix}^T \begin{pmatrix}
\mu_1^Tw \mathbbm{1}_{\frac{n}{2}}^T && \mu_2^Tw \mathbbm{1}_{\frac{n}{2}}^T 
\end{pmatrix}^T -\lambda^Ty = $$
$$ = \tilde{\lambda}^T\tilde{X}^T w+ \tilde{\lambda}^T \begin{pmatrix}
\frac{\mu_1^Tw}{\sigma_1} \mathbbm{1}_{\frac{n}{2}}^T && \frac{\mu_2^Tw}{\sigma_2} \mathbbm{1}_{\frac{n}{2}}^T \end{pmatrix}^T - \tilde{\lambda}^T(\frac{\mathbbm{1}_{\frac{n}{2}}^T}{\sigma_1}, \frac{\mathbbm{1}_{\frac{n}{2}}^T}{\sigma_2})$$

Plugging it in in the optimization problem above and denoting $\tilde{\mu_i} = \frac{\mu_i}{\sigma_i}, i = 1, 2$, $m = (\tilde{\mu_1}^Tw-\sigma_1^{-1}, \dots, \tilde{\mu_1}^Tw-\sigma_1^{-1}, \tilde{\mu_2}^Tw-\sigma_2^{-1}, \dots, \tilde{\mu_2}^Tw-\sigma_2^{-1})$ , we obtain:

$$\min_w \max_{\lambda} \psi(w) + \lambda^T (X^T w - y) =  \min_w \max_{\tilde{\lambda}} \tilde{\lambda}^T \tilde{X} w + \psi(w) + \tilde{\lambda}^T m$$ 

Since $\phi(w,\tilde{\lambda}) = \psi(w) + \tilde{\lambda}^T m$ is convex in $w$ and is concave (linear) in $\tilde{\lambda}$ and $\tilde{X}$ is standard normal, we can replace this PO problem by the corresponding AO, which is known to yield solutions with the same empirical distribution and the same distribution of 
$\mu_1^Tw, \mu_2^Tw, \Vert w \Vert_2^2$ according to Theorem \ref{thm:cgmt}:

$$ \min_w \max_{\tilde{\lambda}} \Vert \tilde{\lambda} \Vert g^Tw + \tilde{\lambda}^Th \Vert w \Vert_2 + \psi(w) + \tilde{\lambda}^T m $$

Write $\tilde{\lambda} = \alpha u$, where $\alpha = | \tilde{\lambda} | \ge 0$ and $u$ is unit. We then have: 

$$\min_w \max_{\alpha \ge 0, u} \alpha g^Tw + \tilde{\lambda}^T (h \Vert w \Vert_2 + m) + \psi(w)$$

Since this is clearly maximized when $u$ is aligned with $h \Vert w \Vert_2 + m$, we simplify the expression:

$$\min_w \max_{\alpha \ge 0} \alpha g^Tw + \alpha \Vert h \Vert w \Vert_2 + m \Vert_2 + \psi(w)$$

We will simplify $\Vert h \Vert w \Vert_2 + m \Vert_2$ before proceeding further. Note that 

$$\Vert h \Vert w \Vert_2 + m \Vert_2 = \sqrt{h^Th \Vert w \Vert_2^2 + 2h^Tm\Vert w \Vert_2 + m^Tm}$$

Recall that $h$ is standard normal. Thus, $h^Th$ is almost equal to $n$, because we work in the asymptotic regime. The second term $2h^Tm\Vert w \Vert_2$ is negligible compared to the first for almost any $h$ because $m$ is always in the span of two vectors $(1 , \dots, 1, 0, \dots, 0)^T$ and $( 0, \dots, 0, 1 , \dots, 1)^T$ and the projection of $w$ onto this $2$- dimensional span is negligible for almost any $h$. Finally, $m^Tm = \frac{n}{2\sigma_1^2}(\mu_1^Tw-1)^2 + \frac{n}{2\sigma_2^2}(\mu_2^Tw+1)^2$. Incorporating all these observations into the objective and switching the order of optimization using the convex-concativity of the terms we get:

$$ \max_{\alpha \ge 0} \min_w \alpha g^Tw + \alpha \sqrt{n\Vert w \Vert_2^2 + \frac{n}{2\sigma_1^2}(\mu_1^Tw-1)^2 + \frac{n}{2\sigma_2^2}(\mu_2^Tw+1)^2} + \psi(w)$$

To get rid of the square root, we use Lemma \ref{lem:sqrt} and arrive to:

$$\max_{\alpha \ge 0} \min_{w, \beta \ge 0} \alpha g^Tw + \frac{\alpha}{2\beta} + \frac{\alpha \beta n}{2}(\Vert w \Vert_2^2 + \frac{1}{2\sigma_1^2}(\mu_1^Tw-1)^2 + \frac{1}{2\sigma_2^2}(\mu_2^Tw+1)^2) + \psi(w)$$

Substituting $\frac{\mu_1^Tw-1}{\sigma_1}$ and $\frac{\mu_2w+1}{\sigma_2}$ by $a_1$ and $a_2$ respectively and adding two more scalar variables $\gamma_1, \gamma_2$ we deduce:

$$\max_{\alpha \ge 0} \min_{w, \beta \ge 0} \max_{\gamma_1, \gamma_2, a_1, a_2} \alpha g^Tw + \frac{\alpha}{2\beta} + \frac{\alpha \beta n}{2}(\Vert w \Vert_2^2 + \frac{1}{2}a_1^2 + \frac{1}{2}a_2^2) + \gamma_1(\frac{\mu_1^Tw-1}{\sigma_1} - a_1) + \gamma_2(\frac{\mu_2w+1}{\sigma_2} - a_2) + \psi(w)$$

Taking the derivatives by $a_i, i = 1,2$ and equating them to $0$ leads to $a_i = \frac{2 \gamma_i}{ \alpha \beta n}, i = 1, 2$. Plugging these in and simplifying and regrouping the terms we obtain the desired optimization problem:

$$ \max_{\alpha \ge 0} \min_{w, \beta \ge 0} \max_{\gamma_1, \gamma_2} \psi(w) + \alpha g^Tw + \frac{\alpha}{2\beta} + \frac{\alpha \beta n}{2}\Vert w \Vert_2^2 + \frac{\gamma_1(\mu_1^Tw-1)}{\sigma_1} + \frac{\gamma_2(\mu_2^Tw+1)}{\sigma_2}  - \frac{\gamma_1 ^ 2 + \gamma_2 ^ 2}{\alpha \beta n}$$

\end{proof}

\begin{proof}{\bf of Theorem \ref{SGD_CGMT}}

Put $\psi(w) = \Vert w \Vert^2_2$ in the objective of Theorem $\ref{psi_SMD_CGMT}$:

$$\max_{\alpha \ge 0} \min_{\beta \ge 0} \max_{\gamma_1, \gamma_2} \min_{w} \Vert w \Vert^2_2 + \alpha g^Tw + \frac{\alpha}{2\beta} + \frac{\alpha \beta n}{2}\Vert w \Vert_2^2 + \frac{\gamma_1(\mu_1^Tw-1)}{\sigma_1} + \frac{\gamma_2(\mu_2^Tw+1)}{\sigma_2}  - \frac{\gamma_1 ^ 2 + \gamma_2 ^ 2}{\alpha \beta n}$$

Denote $\tilde{\gamma_i} = \frac{\gamma_i}{\sigma_i}, i = 1,2$ and rewrite it in the following way:

$$\max_{\alpha \ge 0} \min_{\beta \ge 0} \max_{\tilde{\gamma_1}, \tilde{\gamma_2}} \min_{w} \tilde{\gamma_2}- \tilde{\gamma_1}  + \frac{\alpha}{2\beta} - \frac{\sigma_1^2\tilde{\gamma_1} ^ 2 + \sigma_2^2\tilde{\gamma_2} ^ 2}{\alpha \beta n} + \sum_i (1 + \frac{\alpha \beta n}{2}) w_i^2 + w_i (\alpha g_i + \tilde{\gamma_1}\mu_{1i} + \tilde{\gamma_2}\mu_{2i})$$

Thus, the minimization over $w$ reduces to minimization over $w_i$ for each $w_i$ separately. The latter is straightforward because the objective of the minimization is just a quadratic polynomial. Therefore,
the optimal $w_i = - \frac{\alpha g_i + \tilde{\gamma_1} \mu_{1i} + \tilde{\gamma_2} \mu_{2i}}{2 + \alpha \beta n}$ for each $i$ and thus $w = - \frac{\alpha g + \tilde{\gamma_1} \mu_{1} + \tilde{\gamma_2} \mu_{2}}{2 + \alpha \beta n}$. Hence, we obtain the following optimization problem:

$$\max_{\alpha \ge 0} \min_{\beta \ge 0} \max_{\tilde{\gamma_1}, \tilde{\gamma_2}} \tilde{\gamma_2} - \tilde{\gamma_1}  + \frac{\alpha}{2\beta} - \frac{\sigma_1^2\tilde{\gamma_1} ^ 2 + \sigma_2^2\tilde{\gamma_2} ^ 2}{\alpha \beta n} - \frac{1}{4 + 2\alpha \beta n}\sum_i (\alpha g_i + \tilde{\gamma_1} \mu_{1i} + \tilde{\gamma_2} \mu_{2i})^2$$

We will simplify the sum before proceeding further with the expression. First, note that

$$\sum_i (\alpha g_i + \tilde{\gamma_1} \mu_{1} + \tilde{\gamma_2} \mu_{2})^2 = \sum_{i} \alpha^2 g_i^2 + \tilde{\gamma_1}^2 \mu_{1i}^2 + \tilde{\gamma_2}^2 \mu_{2i}^2 + 2\alpha(\tilde{\gamma_1} \mu_{1i}g_i + \tilde{\gamma_2} \mu_{2i}g_i)+2\tilde{\gamma_1}\tilde{\gamma_2}\mu_{1i}\mu_{2i} = $$
$$ = \alpha^2 \Vert g \Vert_2^2 + \tilde{\gamma_1}^2 \Vert \mu_1 \Vert_2^2 + \tilde{\gamma_2}^2 \Vert \mu_2 \Vert_2^2 + 2 \alpha \tilde{\gamma_1} \mu_1^Tg + 2 \alpha \tilde{\gamma_2} \mu_2^Tg + 2 \tilde{\gamma_1} \tilde{\gamma_2} \mu_1^T\mu_2$$

Since $g$ is standard normal and $\mu_1, \mu_2$ are two fixed vectors we can ignore the $2 \alpha \tilde{\gamma_1} \mu_1^Tg$ and $2 \alpha \tilde{\gamma_2} \mu_2^Tg$ terms and replace $\Vert g \Vert_2^2$ by $d$ asymptotically. Hence, we can replace the sum with:

$$\alpha^2 d + \tilde{\gamma_1}^2 \Vert \mu_1 \Vert_2^2 + \tilde{\gamma_2}^2 \Vert \mu_2 \Vert_2^2 + 2 \tilde{\gamma_1} \tilde{\gamma_2} \mu_1^T\mu_2$$

Plugging it back into the main objective we have:

$$\max_{\alpha \ge 0} \min_{\beta \ge 0} \max_{\tilde{\gamma_1}, \tilde{\gamma_2}}  \tilde{\gamma_2} - \tilde{\gamma_1}  + \frac{\alpha}{2\beta} - \frac{\sigma_1^2\tilde{\gamma_1} ^ 2 + \sigma_2^2\tilde{\gamma_2} ^ 2}{\alpha \beta n} - \frac{1}{4 + 2 \alpha \beta n}(\alpha^2 d + \tilde{\gamma_1}^2 \Vert \mu_1 \Vert_2^2 + \tilde{\gamma_2}^2 \Vert \mu_2 \Vert_2^2 + 2 \tilde{\gamma_1} \tilde{\gamma_2} \mu_1^T\mu_2)$$

Take the derivatives by $\tilde{\gamma_1}, \tilde{\gamma_2}$ and equate them to zero:

$$-1 - \frac{2 \sigma_1^2\tilde{\gamma_1}}{\alpha \beta n} - \frac{\Vert \mu_1 \Vert_2^2 \tilde{\gamma_1} + \mu_1^T \mu_2 \tilde{\gamma_2}}{2 + \alpha \beta n} = 0 $$
$$1 - \frac{2 \sigma_2^2\tilde{\gamma_2}}{\alpha \beta n} - \frac{\Vert \mu_2 \Vert_2^2 \tilde{\gamma_2} + \mu_1^T \mu_2 \tilde{\gamma_1}}{2 + \alpha \beta n} = 0 $$

Denote 

$$\tilde{\tilde{\gamma_i}} = \frac{4\tilde{\gamma}_i}{\alpha \beta n (\alpha \beta n + 2)}, \tilde{\tilde{\gamma}} = (\tilde{\tilde{\gamma_1}}, \tilde{\tilde{\gamma_2}})^T \text{ and } M = \begin{pmatrix}
\frac{\alpha \beta n}{4} \Vert \mu_1 \Vert_2^2 + \sigma_1^2(\frac{\alpha \beta n}{2} + 1) & \frac{\alpha \beta n}{4} \mu_1^T\mu_2 \\
\frac{\alpha \beta n}{4} \mu_1^T\mu_2 & \frac{\alpha \beta n}{4} \Vert \mu_2 \Vert_2^2 + \sigma_2^2(\frac{\alpha \beta n}{2} + 1) \end{pmatrix}$$

The linear system of equations in $\tilde{\gamma}_1$ and $\tilde{\gamma}_2$ then translates as 

$$ M \tilde{\tilde{\gamma}} = \begin{pmatrix} -1 \\ 1 \end{pmatrix}$$

Note that $det(M) = \Delta$, where $\Delta$ is defined in the statement of the theorem. Hence, we deduce:

$$ \tilde{\tilde{\gamma}} = \frac{1}{\Delta} \begin{pmatrix}
\frac{\alpha \beta n}{4} \Vert \mu_2 \Vert_2^2 + \sigma_1^2(\frac{\alpha \beta n}{2} + 1) & -\frac{\alpha \beta n}{4} \mu_1^T\mu_2 \\
-\frac{\alpha \beta n}{4} \mu_1^T\mu_2 & \frac{\alpha \beta n}{4} \Vert \mu_1 \Vert_2^2 + \sigma_2^2(\frac{\alpha \beta n}{2} + 1) \end{pmatrix} \begin{pmatrix} -1 \\ 1 \end{pmatrix} $$

Which gives us 

$$ \tilde{\tilde{\gamma_1}} = -\frac{1}{\Delta}(\frac{\alpha\beta n}{4}(\Vert \mu_2 \Vert_2^2 + \mu_1^T\mu_2) + \sigma_1^2(\frac{\alpha \beta n}{2} + 1))$$
$$ \tilde{\tilde{\gamma_2}} = \frac{1}{\Delta}(\frac{\alpha\beta n}{4}(\Vert \mu_1 \Vert_2^2 + \mu_1^T\mu_2) + \sigma_2^2(\frac{\alpha \beta n}{2} + 1))$$

Recover $\tilde{\gamma_1}, \tilde{\gamma_2}$: 

$$ \tilde{\gamma_1} = -\frac{\alpha\beta n (\alpha \beta n + 2)}{4\Delta}(\frac{\alpha\beta n}{4}(\Vert \mu_2 \Vert_2^2 + \mu_1^T\mu_2) + \sigma_1^2(\frac{\alpha \beta n}{2} + 1))$$
$$ \tilde{\gamma_2} = \frac{\alpha\beta n (\alpha \beta n + 2)}{4\Delta}(\frac{\alpha\beta n}{4}(\Vert \mu_1 \Vert_2^2 + \mu_1^T\mu_2) + \sigma_2^2(\frac{\alpha \beta n}{2} + 1))$$

We can find the optimal $w$ using $\tilde{\gamma_1}$ and $\tilde{\gamma_2}$: 
$$w = - \frac{\alpha g + \tilde{\gamma_1} \mu_{1} + \tilde{\gamma_2} \mu_{2}}{2 + \alpha \beta n} = $$ 
$$ = -\frac{\alpha g}{2 + \alpha \beta n} + \frac{\alpha\beta n}{4\Delta}(\frac{\alpha\beta n}{4}(\Vert \mu_2 \Vert_2^2 + \mu_1^T\mu_2) + \sigma_1^2(\frac{\alpha \beta n}{2} + 1))\mu_1 - \frac{\alpha\beta n}{4\Delta}(\frac{\alpha\beta n}{4}(\Vert \mu_1 \Vert_2^2 + \mu_1^T\mu_2) + \sigma_2^2(\frac{\alpha \beta n}{2} + 1))\mu_2$$

Instead of directly inserting $\tilde{\gamma_1}$ and $\tilde{\gamma_2}$ into the objective now, which appears to be a horrendous task, we will remember what the optimal $w$ is but will return to the initial objective and change the order of optimization first using that the objective is convex in $w$ and concave in $\tilde{\gamma_1}, \tilde{\gamma_2}$:

$$\max_{\alpha \ge 0} \min_{\beta \ge 0} \min_{w} \max_{\tilde{\gamma_1}, \tilde{\gamma_2}}  \Vert w \Vert^2_2 + \alpha g^Tw + \frac{\alpha}{2\beta} + \frac{\alpha \beta n}{2}\Vert w \Vert_2^2 + \tilde{\gamma_1}(\mu_1^Tw-1) + \tilde{\gamma_2}(\mu_2^Tw+1)  - \frac{\sigma_1^2\tilde{\gamma_1} ^ 2 + \sigma_2^2{\gamma_2} ^ 2}{\alpha \beta n}$$

Differentiating by $\tilde{\gamma_1}$ and $\tilde{\gamma_2}$ and equating to $0$ again we immediately see that $\tilde{\gamma_1} = \frac{\alpha \beta n}{2\sigma_1^2}(\mu_1^Tw-1)$ and $\tilde{\gamma_2} = \frac{\alpha \beta n}{2\sigma_2^2}(\mu_2^Tw + 1)$. Incorporating this remark into the objective we get:

$$\max_{\alpha \ge 0} \min_{\beta \ge 0} \min_{w} \Vert w \Vert^2_2 + \alpha g^Tw + \frac{\alpha}{2\beta} + \frac{\alpha \beta n}{2}\Vert w \Vert_2^2 + \frac{\alpha \beta n}{4\sigma_1^2} (\mu_1^Tw-1)^2 + \frac{\alpha \beta n}{4\sigma_2^2}(\mu_2^Tw+1)^2 $$

Note that this is a quadratic function in $w$ whose linear term is equal to 
$$(\alpha g + \frac{\alpha\beta n}{2}(\frac{\mu_2}{\sigma^2_2} - \frac{\mu_1}{\sigma^2_1}))^Tw$$. 

Hence, the value of the objective at the optimal parameter $w$ equals $(\frac{\alpha}{2}g  + \frac{\alpha\beta n}{4}(\frac{\mu_2}{\sigma^2_2} - \frac{\mu_1}{\sigma^2_1}))^Tw$. Use the expression for the optimal $w$ we derived earlier to evaluate it at the optimal $w$. We will deem the cross-terms negligible because for a random standard Gaussian $g$ its dot products with $\mu_1$ and $\mu_2$ are negligible and will also replace $g^Tg$ by $d$:

$$(\frac{\alpha}{2}g  + \frac{\alpha\beta n}{4}(\frac{\mu_2}{\sigma^2_2} - \frac{\mu_1}{\sigma^2_1}))^Tw = -\frac{\alpha^2d}{4 + 2 \alpha \beta n} + \frac{(\alpha\beta n)^2}{16\Delta}(\frac{\alpha\beta n}{4}(\Vert \mu_2 \Vert_2^2 + \mu_1^T\mu_2) + \sigma_1^2(\frac{\alpha \beta n}{2} + 1))(\frac{\mu_1^T\mu_2}{\sigma_2^2} - \frac{\Vert \mu_1 \Vert_2^2}{\sigma_1^2}) - $$ 
$$- \frac{(\alpha\beta n)^2}{16\Delta}(\frac{\alpha\beta n}{4}(\Vert \mu_1 \Vert_2^2 + \mu_1^T\mu_2) + \sigma_2^2(\frac{\alpha \beta n}{2} + 1))(\frac{\Vert \mu_2 \Vert_2^2}{\sigma_2^2} - \frac{\mu_1^T\mu_2}{\sigma_1^2}) = $$
$$ = -\frac{\alpha^2d}{4(1 + \frac{\alpha\beta n}{2})} - \frac{(\alpha\beta n )^2}{16\Delta}(\Vert \mu_1 \Vert ^2 + \Vert \mu_2 \Vert ^ 2 - (\frac{\sigma_1^2}{\sigma_2^2} + \frac{\sigma_2^2}{\sigma_1^2})\mu_2^T\mu_1) - $$ 
$$ - \frac{(\alpha \beta n)^3}{32\Delta}(\Vert \mu_1 \Vert ^2 + \Vert \mu_2 \Vert ^ 2 - (\frac{\sigma_1^2}{\sigma_2^2} + \frac{\sigma_2^2}{\sigma_1^2})\mu_2^T\mu_1 + (\frac{1}{2\sigma_1^2} + \frac{1}{2\sigma_2^2})(\Vert \mu_1 \Vert^2 \Vert \mu_2 \Vert ^ 2 - (\mu_2^T\mu_1)^2))$$
\end{proof}

\begin{proof}{\bf of Theorem \ref{1SMD_CGMT}}

Plug in $\psi(w) = \Vert w \Vert_1$ in the objective of Theorem $\ref{psi_SMD_CGMT}$:

$$\max_{\alpha \ge 0} \min_{\beta \ge 0} \max_{\gamma_1, \gamma_2} \min_{w} \Vert w \Vert_1 + \alpha g^Tw + \frac{\alpha}{2\beta} + \frac{\alpha \beta n}{2}\Vert w \Vert_2^2 + \frac{\gamma_1(\mu_1^Tw-1)}{\sigma_1} + \frac{\gamma_2(\mu_2^Tw+1)}{\sigma_2} - \frac{\gamma_1 ^ 2 + \gamma_2 ^ 2}{\alpha \beta n}$$

Note that this expression can be split in $i$:

$$\max_{\alpha \ge 0} \min_{\beta \ge 0} \max_{\gamma_1, \gamma_2} \min_{w} \frac{\gamma_2}{\sigma_2} - \frac{\gamma_1}{\sigma_1}  + \frac{\alpha}{2\beta} - \frac{\gamma_1 ^ 2 + \gamma_2 ^ 2}{\alpha \beta n} + \sum_i |w_i| + \alpha g_iw_i + \frac{\alpha \beta n}{2} w_i^2  + \frac{\gamma_1}{\sigma_1}\mu_{1i}w_i + \frac{\gamma_2}{\sigma_2}\mu_{2i}w_i $$

Therefore, minimizing the entire expression in $w$ is equivalent to minimizing the corresponding summand for each $i$:

$$ \min_{w_i}  |w_i| + \alpha g_iw_i + \frac{\alpha \beta n}{2} w_i^2  + \frac{\gamma_1}{\sigma_1}\mu_{1i}w_i + \frac{\gamma_2}{\sigma_2}\mu_{2i}w_i $$

Denoting $u_i = |w_i|, \epsilon_i = sign(w_i)$ we rewrite it as:

$$\min_{u_i \ge 0, \epsilon_i = \pm 1}  u_i + u_i\epsilon_i(\alpha g_i  + \frac{\gamma_1}{\sigma_1}\mu_{1i} + \frac{\gamma_2}{\sigma_2}\mu_{2i}) + \frac{\alpha \beta n}{2} u_i^2$$

It is clear now that this is minimized when $\epsilon_i = - sign(\alpha g_i  + \frac{\gamma_1}{\sigma_1}\mu_{1i} + \frac{\gamma_2}{\sigma_2}\mu_{2i})$ and the problem reduces to:

$$\min_{u_i \ge 0}  u_i(1 - |\alpha g_i  + \frac{\gamma_1}{\sigma_1}\mu_{1i} + \frac{\gamma_2}{\sigma_2}\mu_{2i}|) + \frac{\alpha \beta n}{2} u_i^2$$

The latter is just a quadratic problem with a constraint $u_i \ge 0$ and therefore the solution is 
$u_i = \max(0, \frac{|\alpha g_i  + \frac{\gamma_1}{\sigma_1}\mu_{1i} + \frac{\gamma_2}{\sigma_2}\mu_{2i}| - 1}{\alpha \beta n})$. The corresponding value of the objective  is:
 $$ - \frac{|\alpha g_i  + \frac{\gamma_1}{\sigma_1}\mu_{1i} + \frac{\gamma_2}{\sigma_2}\mu_{2i}| - 1}{2\alpha \beta n} \max(0, |\alpha g_i  + \frac{\gamma_1}{\sigma_1}\mu_{1i} + \frac{\gamma_2}{\sigma_2}\mu_{2i}| - 1)$$
Recover the corresponding $w_i$:
$$w_i = \epsilon_i u_i = - sign(\alpha g_i  + \frac{\gamma_1}{\sigma_1}\mu_{1i} + \frac{\gamma_2}{\sigma_2}\mu_{2i})\max(0, \frac{|\alpha g_i  + \frac{\gamma_1}{\sigma_1}\mu_{1i} + \frac{\gamma_2}{\sigma_2}\mu_{2i}| - 1}{\alpha \beta n})$$

Replacing each $|w_i| + \alpha g_iw_i + \frac{\alpha \beta n}{2} w_i^2  + \frac{\gamma_1}{\sigma_1}\mu_{1i}w_i + \frac{\gamma_2}{\sigma_2}\mu_{2i}w_i$ in the main objective from the beginning of the proof with the value of the objective at the optimal $w_i$ we just found, we derive the desired four - dimensional optimization problem: 

$$\max_{\alpha \ge 0} \min_{\beta \ge 0} \max_{\gamma_1, \gamma_2} \frac{\gamma_2}{\sigma_2} - \frac{\gamma_1}{\sigma_1}  + \frac{\alpha}{2\beta} - \frac{\gamma_1 ^ 2 + \gamma_2 ^ 2}{\alpha \beta n} - \sum_i \frac{|\alpha g_i  + \frac{\gamma_1}{\sigma_1}\mu_{1i} + \frac{\gamma_2}{\sigma_2}\mu_{2i}| - 1}{2\alpha \beta n} \max(0, |\alpha g_i  + \frac{\gamma_1}{\sigma_1}\mu_{1i} + \frac{\gamma_2}{\sigma_2}\mu_{2i}| - 1) $$

\end{proof}

\end{document}